\lstdefinelanguage{L4}
{morekeywords={
      assert 
    , class  
    , decl   
    , defn   
    , extends
    , lexicon
    , fact   
    , rule   
    , derivable
    , let   
    , in    
    , not   
    , forall
    , exists
    , if   
    , then 
    , else 
    , for  
    , true 
    , false
},    
sensitive=false,
morecomment=[l]{\#},
morestring=[b]",
}
\tiny\color{gray},
\definecolor{dkgreen}{rgb}{0,0.6,0}
\definecolor{gray}{rgb}{0.5,0.5,0.5}
\definecolor{mauve}{rgb}{0.58,0,0.82}
\newcommand{\blue}[1]{{\color{blue}#1}}
\newcommand{\ie}{\textit{i.e. }}
\newcommand{\wrt}{\textit{w.r.t.\ }}
\newcommand{\appref}[1]{Appendix~\ref{#1}}
\newcommand{\secref}[1]{Section~\ref{#1}}
\newcommand{\secrefs}[1]{Sections~\ref{#1}}
\newcommand{\figref}[1]{Figure~\ref{#1}}
\newcommand{\lemmaref}[1]{Lemma~\ref{#1}}
\newcommand{\exampleref}[1]{Example~\ref{#1}}
\newcommand{\IMPL}[0]{\longrightarrow}
\newcommand{\AND}[0]{\land}
\newcommand{\OR}[0]{\lor}
\newcommand{\NOT}[0]{\lnot}
\newcommand{\IFF}[0]{\leftrightarrow}
\begin{document}
\title{Automating Defeasible Reasoning in Law}

\author{How Khang Lim\orcidID{0000-0002-9333-1364} \and
  Avishkar Mahajan\orcidID{0000-0002-9925-1533} \and
  Martin Strecker\orcidID{0000-0001-9953-9871} \and
  Meng Weng Wong\orcidID{0000-0003-0419-9443}
  }
  \institute{Singapore Management University}
\maketitle

\begin{abstract}
The paper studies defeasible reasoning in rule-based systems, in particular
about legal norms and contracts. We identify rule modifiers that specify how
rules interact and how they can be overridden. We then define rule
transformations that eliminate these modifiers, leading in the end to 
a translation of rules to formulas. For reasoning with and about rules, we
contrast two approaches, one in a classical logic with SMT solvers as proof
engines, one in a non-monotonic logic with Answer Set Programming solvers.


\end{abstract}

\keywords{
  Knowledge representation and reasoning,
  Argumentation and law,
  Computational Law,
  Defeasible reasoning
}


\section{Introduction}\label{sec:introduction}

Computer-supported reasoning about law is a longstanding effort of researchers
from different disciplines such as jurisprudence, artificial intelligence, logic and
philosophy. What originally may have appeared as an academic playground is
now evolving into a realistic scenario, for various reasons. 

On the \emph{demand} side, there is a growing number of human-machine or
machine-machine interactions where compliance with legal norms or with a contract is essential,
such as in sales, insurance, banking and finance or digital rights management, to name but
a few. Innumerable ``smart contract'' languages attest to the interest to
automate these processes, even though many of them are dedicated
programming languages rather than formalisms intended to express and reason about
regulations.

On the \emph{supply} side, decisive advances have been made in fields such as
automated reasoning and language technologies, both for computerised domain
specific languages (DSLs) and natural languages. Even though a completely
automated processing of traditional law texts capturing the subtleties of
natural language is currently out of scope, one can expect to code a law text
in a DSL that is amenable to further processing.

This ``rules as code'' approach is the working hypothesis of our CCLAW
project\footnote{\url{https://cclaw.smu.edu.sg/}}: law texts are formalised in
a DSL called L4 that is sufficiently precise to avoid ambiguities of natural
languages and at the same time sufficiently close to a traditional law text
with its characteristic elements such as cross references, prioritisation of
rules and defeasible reasoning. Indeed, presenting these features is one of
the main topics of this paper. Once a law has been coded in L4, it can
be further processed: it can be converted to natural language \cite{listemnmaa2021cnl} to
be as human-readable as a traditional law text, and efficient executable code can be
extracted, for example to perform tax calculations (all this is not the topic
of the present paper). It can also be analysed, to find faults in the law
text on the meta level (such as consistency and completeness of a rule set),
but also on the object level, to decide individual cases.


\paragraph{Overview of the paper}
The main emphasis of this paper is on the L4 DSL that is currently under
definition, which in particular features a formalism for transcribing rules
and reasoning support for verifying their properties. \secref{sec:l4_language} is dedicated to a description of the language
and the L4 system implementation currently under development. The rule
language will be dissected in \secref{sec:resasoning_with_rules}. We will in
particular describe mechanisms for prioritisation and defeasibility of rules
that are encoded via specific keywords in law texts. We then define a precise
semantics of these mechanisms, by a translation to logic. Classical, monotonic
logic, developed in \secref{sec:defeasible_classical}, has received surprisingly
little attention in this area, even though proof support in the form of
SAT/SMT solvers has made astounding progress in recent years. An alternative
approach, based on Answer Set Programming, is described in
\secref{sec:defeasible_asp}. We conclude in \secref{sec:conclusions}. \emph{Added by Avishkar} The guiding principle that underlies the

\paragraph{Related work}

There is a huge body of work both on computer-assisted legal reasoning and
(not necessarily related) defeasible reasoning. In a seminal work, Sergot and Kowalski
\cite{sergot_kowalski_etal__british_nationality_acm_1986,kowalski_legislation_logic_programs_1995}
code the British Nationality Act in Prolog, exploiting Prolog's negation as
failure for default reasoning.

The Catala language \cite{merigoux_chataing_protzenko_cata_icfp_2021},
extensively used for coding tax law and resembling more a high-level
programming language than a reasoning formalism, includes default rules, which
are however not entirely disambiguated during compile time so that run time
exceptions can be raised.

An entirely different approach to tool support is taken with the LogiKEy \cite{benzmueller_etal_logikey_2020}
workbench that codes legal reasoning in the Isabelle interactive proof assistant, paving
the way for a very expressive formalism. In contrast, we have opted for a DSL
with fully automated proofs which are provided by SMT respectively ASP solvers. These do not permit for human intervention in the proof process, which would not be adequate for the user group we target. Symboleo
\cite{sarifi_parvizimosaed_amyot_logrippo_mylopoulos_Symboleo_spec_legal_contracts_2020}
and the NAI Suite
\cite{libal_steen_nai_suite_draft_reason_legal_texts_jurix_2019} emphasise
deontic logic rather than defeasible reasoning (the former is so far not
considered in our L4 version).

As a result of a long series of logics, see for example
\cite{governatori_carnead_defeas_logic_icail_2011,governatori21:_unrav_legal_refer_defeas_deont_logic},
Governatori and colleagues have developed the Turnip
system\footnote{\url{https://turnipbox.netlify.com/}} that is based on a
combination of defeasible and deontic logic. The system is applied, among
others, to modelling traffic rules
\cite{governatori_Traffic_Rules_Encoding_using_Defeasible_jurix_2020}.

It seems vain to attempt an exhaustive
review of defeasible reasoning. Before the backdrop of foundational law
theory \cite{hart_concept_of_law_1997}, there are sometimes diverging
proposals for integrating defeasibility, sometimes opting for non-monotonic
logics \cite{hage_law_and_defeasibility_2003}, sometimes taking a more
classical stance \cite{alchourron_makinson_hierarchies_of_regulations_1981}. 
Defeasible rule-based reasoning in the context of argumentation theory is
discussed in \cite{dung_argumentation_theory_1995,amgoud_besnard_rule_based_argumentation_systems_2019}.

On a more practical side, Answer Set Programming (ASP) \cite{asp_background}
goes beyond logic programming and increasingly integrates techniques from
constraint solving, such as in the sCASP system
\cite{arias_phd_2019}. In spite of a convergence of SMT and CASP technologies,
there are few attempts to use SMT for ASP, see
\cite{shen_lierler_smt_answer_set_kr_2018}. For the technologies used in our
own implementation, please see \secref{sec:conclusions}.


\section{An Overview of the L4 Language}\label{sec:l4_language}

This section gives an account of the L4 language as it is currently defined --
as an experimental language, L4 will evolve over the next months. In our
discussion, we will ignore some features such as a natural language interface
\cite{listemnmaa2021cnl} which are not relevant for the topic of this paper but are relevant in the future.

As a language intended for representing legal texts and reasoning about them,
an L4 module is essentially composed of four sections:
\begin{itemize}
\item a terminology in the form of \emph{class definitions};
\item \emph{declarations} of functions and predicates;
\item \emph{rules} representing the core of a law text, specifying what is
  considered as legal behaviour;
\item \emph{assertions} for stating and proving properties about the rules.
\end{itemize}

For lack of space, we cannot give a full description here -- see
\appref{sec:l4_language_app} for more details. 
We will illustrate the concepts with an example, a (fictitious) regulation of speed
limits for different types of vehicles. Classes are, for example, \texttt{Car}
and its subclass \texttt{SportsCar}, \texttt{Day} and \texttt{Road}. We will
in particular be interested in specifying the maximal speed \texttt{maxSp} of
a vehicle on a particular day and type of road, and this will be the purpose
of the rules.

Before discussing rules, a few remarks about \emph{expressions} which are
their main constituents: L4 supports a simple functional language featuring
typical arithmetic, Boolean and comparison operators, an \texttt{if .. then
  .. else} expression, function application, anonymous functions (\ie, lambda
abstraction) written in the form \texttt{$\backslash$x : T -> e}, class
instances and field access. A \emph{formula} is just a
Boolean expression, and, consequently, so are quantified formulas
\texttt{forall x:T. form} and \texttt{exists x:T. form}.

In its most complete form, a \emph{rule} is composed of a list of variable
declarations introduced by the keyword \texttt{for}, a precondition introduced
by \texttt{if} and a post-condition introduced by
\texttt{then}. \figref{fig:rules} gives an example of rules of our speed limit
scenario, stating, respectively, that the maximal speed of cars is 90 km/h on a
workday,
and that they may drive at 130 km/h if the road is a highway.  Note that in
general, both pre- and post-conditions are Boolean formulas that can be
arbitrarily complex, thus are not limited to conjunctions of literals in the
preconditions or atomic formulas in the post-conditions.

\begin{figure}[h!]
  \begin{lstlisting}
rule <maxSpCarWorkday> 
   for v: Vehicle, d: Day, r: Road
   if isCar v && isWorkday d
   then maxSp v d r 90
rule <maxSpCarHighway>
   for v: Vehicle, d: Day, r: Road
   if isCar v && isHighway r
   then maxSp v d r 130
\end{lstlisting}
  \caption{Rules of speed limit example}\label{fig:rules}
\end{figure}

Rules whose precondition is \texttt{true} can be written as \texttt{fact}
without the \texttt{if} \dots \texttt{then} keywords.
Rules may not contain free variables, so all variables occurring in the body of
the rule have to be declared in the \texttt{for} clause. In the absence of
variables to be declared, the \texttt{for} clause can be omitted. 
Intuitively, a rule
\begin{lstlisting}[frame=none,mathescape=true]
  rule <r> for $\overrightarrow{v}$: $\overrightarrow{T}$ if Pre $\overrightarrow{v}$ then Post $\overrightarrow{v}$
\end{lstlisting}
corresponds to a universally quantified formula
$\forall \overrightarrow{v} : \overrightarrow{T}.\; Pre \overrightarrow{v}
\IMPL Post \overrightarrow{v}$ that could directly be written as a fact,
and it may seem that a separate rule syntax is redundant. This is not so,
because the specific structure of rules makes them amenable to transformations
that are useful for defeasible reasoning, as will be seen in
\secrefs{sec:resasoning_with_rules} and \ref{sec:defeasible_classical}.

Apart from user-defined rules and rules obtained by transformation, there are
system generated rules: For each subclass relation \texttt{$C$ extends $B$}, a
class inclusion axiom of the form \texttt{for x: $S$ if is$C$ x then is$B$ x}
is generated, where \texttt{is$C$} and \texttt{is$B$} are the characteristic
predicates and $S$ is the common super-sort of $C$ and $B$.

\begin{figure}[h]
\begin{lstlisting}
assert <maxSpFunctional> {SMT: {valid}}
   maxSp instCar instDay instRoad instSpeed1 &&
   maxSp instCar instDay instRoad instSpeed2
   --> instSpeed1 == instSpeed2
\end{lstlisting}
  \caption{Assertions of speedlimit example}\label{fig:assertions}
\end{figure}

The purpose of our formalization efforts is to be able to make assertions
and prove them, such as the statement in \figref{fig:assertions} which claims
that the predicate 
\texttt{maxSp} behaves like a function, \ie{} given the same car, day and
road, the speed will be the same. Instead of a universal quantification, we
here use variables \texttt{inst...} that have been declared globally, because they
produce more readable (counter-)models.


\section{Reasoning with and about Rules}\label{sec:resasoning_with_rules}

Given a plethora of different notions of ``defeasibility'', we had to make a
choice as to which notions to support, and which semantics to give to them. We
will here concentrate on two concepts, which we call \emph{rule modifiers},
that limit the applicability of rules and make them ``defeasible''. They will
be presented informally in the following. Giving them a precise semantics in
classical, monotonic logic is the topic of \secref{sec:defeasible_classical};
a semantics based on Answer Set Programming will be provided in
\secref{sec:defeasible_asp}.

We will concentrate on two rule modifiers that restrict the applicability of
rules and that frequently occur in law texts: \emph{subject to} and
\emph{despite}. A motivating discussion justifying their informal semantics is
given in \appref{sec:resasoning_with_rules_app}, drawn from a detailed
analysis of Singapore's Professional Conduct Rules. In the following, however,
we return to our running example.

\begin{example}
  The rules \texttt{maxSpCarHighway} and \texttt{maxSpCarWorkday} are not
  mutually exclusive and contradict another because they postulate different
  maximal speeds. For disambiguation, we would like to say:
  \texttt{maxSpCarHighway} holds \emph{despite} rule
  \texttt{maxSpCarWorkday}. In L4, rule modifiers are introduced with the aid
  of \emph{rule annotations}, with a list of rule names following the keywords
  \texttt{subjectTo} and \texttt{despite}. Thus, we modify rule
  \texttt{maxSpCarHighway} of \figref{fig:rules} with
\begin{lstlisting}
rule <maxSpCarHighway>
  {restrict: {despite: maxSpCarWorkday}}
# rest of rule unchanged
\end{lstlisting}
Furthermore, to the delight of the public of the country with the highest
density of sports cars, we also introduce a new rule \texttt{maxSpSportsCar}
that holds \emph{subject to} \texttt{maxSpCarWorkday} and \emph{despite}
\texttt{maxSpCarHighway}:
\begin{lstlisting}
rule <maxSpSportsCar>
  {restrict: {subjectTo: maxSpCarWorkday, 
              despite: maxSpCarHighway}}
   for v: Vehicle, d: Day, r: Road
   if isSportsCar v && isHighway r
   then maxSp v d r 320
 \end{lstlisting}
\end{example}

We will now give an informal characterization of these modifiers:
\begin{itemize}
\item $r_1$ \emph{subject to} $r_2$ and $r_1$ \emph{despite} $r_2$ are complementary
  ways of expressing that one rule may override the other rule. They have in
  common that $r_1$ and $r_2$ have contradicting conclusions. The conjunction
  of the conclusions can either be directly unsatisfiable (such as: ``may hold'' vs.{}
  ``must not hold'') or unsatisfiable \wrt{} an intended background theory
  (obtaining different maximal speeds is inconsistent when expecting
  \texttt{maxSp} to be functional in its fourth argument).
\item Both modifiers differ in that \emph{subject to} modifies the rule to which
  it is attached, whereas \emph{despite} has a remote effect on the rule given
  as argument.
\item They permit to structure a legal text, favouring conciseness and
  modularity: In the case of \emph{despite}, the overridden, typically more
  general rule need not be aware of the overriding, typically subordinate rules.
\item Even though these modifiers appear to be mechanisms on the meta-level in
  that they reasoning about rules, they can directly be reflected on the
  object-level.
\end{itemize}


\section{Defeasible Reasoning in a Classical Logic}\label{sec:defeasible_classical}

In this section, we will describe how to give a precise semantics to the rule
modifiers, by rewriting rules, progressively eliminating the instructions
appearing in the rule annotations so that in the end, only purely logical
rules remain. Making the meaning of the modifiers explicit can therefore be
understood as a \emph{compilation} problem.  Whereas the first preprocessing
steps (\secref{sec:preprocessing}) are generic, we will discuss two variants
of conversion into logical format (\secrefs{sec:restr_precond} and
\ref{sec:restr_deriv}). We will then discuss rule inversion
(\secref{sec:rule_inversion}) which gives our approach a non-monotonic flavour
while remaining entirely in a classical setting. Rule inversion will also be
instrumental for comparing the conversion variants in \secref{sec:comparison}.

\subsection{Rule Modifiers in Classical Logic}\label{sec:rule_modifiers_in_classical_logic}

\subsubsection{Preprocessing}\label{sec:preprocessing}

Preprocessing consists of several elimination steps that are carried out in a
fixed order.

\paragraph{\textbf{``Despite''  elimination}}

As can be concluded from the previous discussion, a
$\mathtt{despite}\; r_2$ clause appearing in rule $r_1$ is equivalent to a
$\mathtt{subjectTo}\; r_1$ clause in rule $r_2$. The first rule transformation
consists in applying exhaustively the following \emph{despite elimination}
rule transformer:

\noindent
\emph{despiteElim:}\\
$
\{r_1 \{\mathtt{restrict}: \{\mathtt{despite}\; r_2\} \uplus a_1\},\;\;
r_2\{\mathtt{restrict}: a_2\}, \dots\} \longrightarrow$\\
$\{r_1 \{\mathtt{restrict}: a_1\},\;\;
r_2\{\mathtt{restrict}:  \{\mathtt{subjectTo}\; r_1\} \uplus a_2\}, \dots\}
$

\begin{example}\label{ex:rewrite_despite}\mbox{}\\
Application of this rewrite rule to the three example rules \texttt{maxSpCarWorkday},
\texttt{maxSpCarHighway} and  \texttt{maxSpSportsCar} changes them to:

\begin{lstlisting}
rule <maxSpCarWorkday>
  {restrict: {subjectTo: maxSpCarHighway}}
rule <maxSpCarHighway>
  {restrict: {subjectTo: maxSpSportsCar}}
rule <maxSpSportsCar>
  {restrict: {subjectTo: maxSpCarWorkday}}
\end{lstlisting}
Here, only the headings are shown, the bodies of the rules are
unchanged. 
\end{example}

One defect of the rule set already becomes apparent to the human reader at
this point: the circular dependency of the rules. We will however continue
with our algorithm, applying the next step which will be to rewrite the
\texttt{\{restrict: \{subjectTo: \dots\}\}} clauses.  Please note that each
rule can be \texttt{subjectTo} several other rules, each of which may have a
complex structure as a result of transformations that are applied to it.

\paragraph{\textbf{``Subject'' To elimination}}

The rule transformer \emph{subjectToElim} does the following: it splits up the
rule into two rules, (1) its source (the rule body as originally given), and
(2) its definition as the result of applying a rule transformation function to
several rules.

\begin{example}\label{ex:rewrite_subject_to}\
Before stating the rule transformer, we show its effect on rule
\texttt{maxSpCarWorkday} of \exampleref{ex:rewrite_despite}. On rewriting
with \emph{subjectToElim}, the rule is transformed into two rules:

\begin{lstlisting}
# new rule name, body of rule unchanged
rule <maxSpCarWorkday'Orig>
   {source}
   for v: Vehicle, d: Day, r: Road
   if isCar v && isWorkday d
   then maxSp v d r 90

# rule with header and without body
rule <maxSpCarWorkday>
 {derived: {apply: 
 {restrictSubjectTo maxSpCarWorkday'Orig  maxSpSportsCar}}}
\end{lstlisting}
\end{example}

We can now state the transformation (after grouping the
\texttt{subjectTo $r_2$}, \dots, \texttt{subjectTo $r_n$} into
\texttt{subjectTo $[r_2 \dots r_n]$}):

\noindent
\emph{subjectToElim:}\\
$
\{r_1 \{\mathtt{restrict}: \{\mathtt{subjectTo}\; [r_2, \dots, r_n]\}\}, \dots \} \longrightarrow$\\
$\{r_1^o \{\mathtt{source}\}, r_1 \{\mathtt{derived:}\; \{\mathtt{apply:}\; \{
\mathtt{restrictSubjectTo}\;\; r_1^o\; [r_2 \dots r_n] \}\}\}, \dots \}
$

\paragraph{\textbf{Computation of derived rule}}
The last step consists in generating the derived rules, by evaluating the
value of the rule transformer expression marked by \texttt{apply}. The rules
appearing in these expressions may themselves be defined by complex
expressions. However, direct or indirect recursion is not allowed. For
simplifying the expressions in a rule set, we compute a rule dependency order
$\prec_R$ defined by: $r \prec_R r'$ iff $r$ appears in the defining
expression of $r'$. If $\prec_R$ is not a strict partial order (in particular, if
it is not cycle-free), then evaluation fails. Otherwise, we order the rules
topologically by $\prec_R$ and evaluate the expressions starting from the
minimal elements. Obviously, the order $\prec_R$ does not prevent rules from being recursive.

\begin{example}
It is at this point that the cyclic dependence already remarked after
\exampleref{ex:rewrite_despite} will be discovered. We have:

\noindent
\texttt{maxSpCarWorkday'Orig}, \texttt{maxSpCarHighway} $\prec_R$ \texttt{maxSpCarWorkday}\\
\texttt{maxSpCarHighway'Orig}, \texttt{maxSpSportsCar} $\prec_R$ \texttt{maxSpCarHighway}\\
\texttt{maxSpSportsCar'Orig}, \texttt{maxSpCarWorkday} $\prec_R$  \texttt{maxSpSportsCar}\\
\noindent
which cannot be totally ordered.

Let us fix the problem by changing the heading of rule
\texttt{maxSpCarHighway} from \texttt{despite} to \texttt{subjectTo}:
\begin{lstlisting}
rule <maxSpCarHighway>
  {restrict: {subjectTo: maxSpCarWorkday}}
\end{lstlisting}

After rerunning \emph{despiteElim} and \emph{subjectToElim}, we can now order
the rules:


\noindent
\{ \texttt{maxSpSportsCar'Orig}
\texttt{maxSpCarHighway'Orig},
\texttt{maxSpCarWorkday} \} $\prec_R$
\texttt{maxSpSportsCar} $\prec_R$
\texttt{maxSpCarHighway}\\
and will use this order for rule elaboration.
\end{example}

\subsubsection{Restriction via Preconditions}\label{sec:restr_precond}

Here, we propose one possible implementation of the rule transformer
\texttt{restrictSubjectTo} introduced in \secref{sec:preprocessing} that takes
a rule $r_1$ and a list of rules $[r_2 \dots r_n]$ and produces a new rule, by
adding the negation of the preconditions of $[r_2 \dots r_n]$ to $r_1$. More
formally:

\begin{itemize}
\item $\mathtt{restrictSubjectTo}\; r_1\; [] = r_1$
\item $\mathtt{restrictSubjectTo}\; r_1\; (r' \uplus rs) =$\\
  $\mathtt{restrictSubjectTo}\; (r_1(precond := precond(r_1) \AND \NOT precond(r')))\; rs$
\end{itemize}
where $precond(r)$ selects the precondition of rule $r$ and $r(precond:=p)$
updates the precondition of rule $r$ with $p$.

There is one proviso to the application of \texttt{restrictSubjectTo}: the
rules have to have the same \emph{parameter interface}: the number and types
of the parameters in the rules' \texttt{for} clause have to be the same.
Rules with different parameter interfaces can be adapted via the
\texttt{remap} rule transformer. The rule 

\begin{lstlisting}[frame=none,mathescape=true]
rule <r> for $x_1$:$T_1$ $\dots$ $x_n$:$T_n$ if Pre($x_1, \dots, x_n$) then Post($x_1, \dots, x_n$)
\end{lstlisting}
is remapped by \texttt{remap r [$y_1: S_1, \dots, y_m: S_m$] [$x_1 := e_1, \dots, x_n := e_n$]}
to 
\begin{lstlisting}[frame=none,mathescape=true]
rule <r> for $y_1$: $S_1$ $\dots$ $y_m$: $S_m$ if Pre($e_1, \dots, e_n$) then Post($e_1, \dots, e_n$)
\end{lstlisting}
Here, $e_1, \dots, e_n$ are expressions that have to be well-typed with types $E_1, \dots, E_n$ in
context $y_1: S_1, \dots, y_m: S_m$ (which means in particular that they may
contain the variables $y_i$) with $E_i \preceq T_i$,  (with the consequence that
the pre- and post-conditions of the new rule remain well-typed), where $\preceq$ is subtyping.

\begin{example}\mbox{}\\
We come back to the running example. When processing the rules in the order of
$\prec_R$, rule \texttt{maxSpSportsCar}, defined by
\texttt{apply: \{restrictSubjectTo maxSpSportsCar'Orig maxSpCarWorkday\}},
becomes:
\begin{lstlisting}
rule <maxSpSportsCar>
   for v: Vehicle, d: Day, r: Road
   if isSportsCar v && isHighway r &&
      not (isCar v && isWorkday d)
   then maxSp v d r 320
 \end{lstlisting}

 We can now state \texttt{maxSpCarHighway}, which has been defined by
 \texttt{apply: \{restrictSubjectTo maxSpCarHighway'Orig maxSpSportsCar\}}, as:

 \begin{lstlisting}
rule <maxSpCarHighway>
   for v: Vehicle, d: Day, r: Road
   if isCar v && isHighway r &&
      not (isSportsCar v && isHighway r &&
              not (isCar v && isWorkday d))  &&
      not (isCar v && isWorkday d))
   then maxSp v d r 130
\end{lstlisting}
\end{example}

One downside of the approach of adding negated preconditions is that the
preconditions of rules can become very complex. This effect is mitigated by
the fact that conditions in \texttt{subjectTo} and \texttt{despite} clauses
express specialisation or refinement and often permit substantial
simplifications. Thus, the precondition of \texttt{maxSpSportsCar} simplifies
to \texttt{isSportsCar v \&\& isHighway r \&\& isWorkday d} and the
precondition of \texttt{maxSpCarHighway} to
\texttt{isCar v \&\& isHighway r \&\& not (isSportsCar v \&\& isWorkday d)}.

\subsubsection{Restriction via Derivability}\label{sec:restr_deriv}

We now give an alternative reading of \texttt{restrictSubjectTo}. To
illustrate the point, let us take a look at a simple propositional example.

\begin{example}\label{ex:small_propositional} Take the definitions:
\begin{lstlisting}
rule <r1> if B1 then C1
rule <r2> {subjectTo: r1} if B2 then C2
\end{lstlisting}
\end{example}

Instead of saying: \texttt{r2} corresponds to
\texttt{\blue{if} B2 \&\& not B1 \blue{then} C2} 
as in \secref{sec:restr_precond}, we would now read it as
``if the conclusion of \texttt{r1} cannot be derived'', 
which could be written as
\texttt{\blue{if} B2 \&\& not C1 \blue{then} C2}.
The two main problems with this naive approach are the following:
\begin{itemize}
\item As mentioned in \secref{sec:resasoning_with_rules}, a \emph{subject to}
  restriction is often applied to rules with contradicting conclusions, so in
  the case that \texttt{C1} is \texttt{not C2}, the generated rule would be a
  tautology.
\item In case of the presence of a third rule
\begin{lstlisting}[frame=none]
rule <r3> if B3 then C1
\end{lstlisting}
a derivation of \texttt{C1} from \texttt{B3} would also block the application
of \texttt{r2}, and \texttt{subjectTo: r1} and \texttt{subjectTo: r1, r3}
would be indistinguishable.
\end{itemize}

We now sketch a solution for rule sets whose conclusion is always an atom (and
not a more complex formula).

\begin{enumerate}
\item In a preprocessing stage, all rules are transformed as follows:
  \begin{enumerate}
  \item We assume the existence of classes \texttt{Rulename$_P$}, one for each
    transformable predicate $P$ (see below).
  \item All the predicates $P$
    occurring in the conclusions of rules (called \emph{transformable
      predicates}) are converted into predicates $P^+$ with one additional
    argument of type \texttt{Rulename$_P$}. In the
    example, \texttt{C1$^+$: Rulename$_{C1}$ -> Boolean} and similarly for \texttt{C2}.
  \item The transformable predicates $P$ in conclusions of rules receive one
    more argument, which is the name \emph{rn} of the rule: $P$ is transformed
    into $P^+\; rn$. The informal reading is ``the predicate is derivable with
    rule \emph{rn}''.
  \item All transformable predicates in the preconditions of the rules receive
    one more argument, which is a universally quantified variable of type
    \texttt{Rulename$_P$} of the appropriate type, bound in the
    \texttt{for}-list of the rule.
  \end{enumerate}
\item In the main processing stage, \texttt{restrictSubjectTo} in the rule
  annotations generates rules according to:
  \begin{itemize}
\item $\mathtt{restrictSubjectTo}\; r_1\; [] = r_1$
\item $\mathtt{restrictSubjectTo}\; r_1\; (r' \uplus rs) =$\\
  $\mathtt{restrictSubjectTo}\; (r_1(precond := precond(r_1) \AND \NOT postcond(r')))\; rs$
  Thus, the essential difference \wrt{} the definition of
  \secref{sec:restr_precond} is that we add the negated post-condition and not
  the negated pre-condition.
\end{itemize}
\end{enumerate}

\begin{example} The rules of \exampleref{ex:small_propositional} are now
  transformed to:
\begin{lstlisting}[mathescape=true]
rule <r1> for rn:Rulename$_{B1}$ if B1$^+$ rn then C1$^+$ r1
rule <r2> for rn:Rulename$_{B2}$ if B2$^+$ rn and not C1$^+$ r1 then C2$^+$ r2
\end{lstlisting}
The derivability of another instance of \texttt{C1}, such as \texttt{C1$^+$ r3},
would not inhibit the application of \texttt{r2} any more.
\end{example}

\begin{example} The two rules of the running example become, after resolution
  of the \texttt{restrictSubjectTo} clauses:
\begin{lstlisting}[mathescape=true]
rule <maxSpSportsCar>
   for v: Vehicle, d: Day, r: Road
   if isSportsCar v && isHighway r &&
      not maxSp$^+$ maxSpCarWorkday v d r 90
   then maxSp$^+$ maxSpSportsCar v d r 320
rule <maxSpCarHighway>
   for v: Vehicle, d: Day, r: Road
   if isCar v && isHighway r &&
      not maxSp$^+$ maxSpCarWorkday v d r 90 &&
      not maxSp$^+$ maxSpSportsCar v d r 320
   then maxSp$^+$ maxSpCarHighway v d r 130
\end{lstlisting}
\end{example}

\subsection{Rule Inversion}\label{sec:rule_inversion}

The purpose of this section is to derive formulas that, for a given rule set,
simulate negation as failure, but are coded in a classical first-order logic,
do not require a dedicated proof engine (such as Prolog) and can be checked
with a SAT or SMT solver. The net effect is similar to the completion
introduced by Clark \cite{clark_NegAsFailure_1978}; however, the justification
is not operational as in \cite{clark_NegAsFailure_1978}, but takes inductive
closure as a point of departure. Some of the ideas are reminiscent of least
fixpoint semantics of logic programs, as discussed in
\cite{falaschi_etal_declarative_logic_langauges_1989,fages_consistency_clark_completion_1994}.
The discussion below applies to a considerably wider class of formulas.

In the following, we assume that our rules have an atomic predicate $P$ as
conclusion, whereas the precondition $Pre$ can be an arbitrarily complex
formula.  We furthermore assume that rules are in \emph{normalized form}: $P$
may only be applied to $n$ distinct variables $x_1, \dots, x_n$, where $n$ is
the arity of $P$, and the rule quantifies over exactly these variables.
For notational simplicity, we write normalized rules in logical format,
ignoring types:
$\forall x_1, \dots, x_n. Pre(x_1, \dots x_n) \IMPL Post(x_1, \dots, x_n)$.

Every rule can be written in normalized form, by applying the following
algorithm:
\begin{itemize}
\item Remove expressions or duplicate variables in the conclusion, by using
  the equivalences $P(\dots e \dots) = (\forall x. x = e \IMPL P(\dots x
  \dots))$ for a fresh variable x, and similarly $P(\dots y \dots y \dots) =
  (\forall x. x = y   \IMPL P(\dots x \dots y \dots))$.
\item Remove variables from the universal quantifier prefix if they do not
  occur in the conclusion, by using the equivalence
  $(\forall x. Pre(\dots x \dots) \IMPL P) = (\exists x. Pre(\dots x \dots))
  \IMPL P$.
\end{itemize}

For any rule set $\cal R$ and predicate $P$, we can form the set of
$P$-rules, ${\cal R}[P]$, as
\begin{align*}
\{ & \forall x_1, \dots, x_n. Pre_1[P](x_1, \dots x_n) \IMPL P(x_1, \dots, x_n),
     \dots,\\
  & \forall x_1, \dots, x_n.Pre_k[P](x_1, \dots x_n)\IMPL P(x_1, \dots,
x_n)\}
\end{align*}
as the subset of $\cal R$ containing all rules having $P$ as
post-condition. The notation $F[P]$ is meant to indicate that the $F$ can
contain $P$. It can also be taken as a \emph{functional}, \ie{} a higher-order
function having $P$ as parameter.

We say that a functional $F$ is \emph{semantically monotonic} if
\[
  (\forall x_1, \dots, x_n.\; P(x_1, \dots, x_n) \IMPL P'(x_1, \dots, x_n)) \IMPL 
  (\forall \overrightarrow{v}. F[P] \IMPL F[P'])
\]
A sufficient condition for semantic monotonicity is syntactic monotonicity:
$P$ does not occur under an odd number of negations in $F$. 

The inductive closure of a set of $P$-rules is the predicate $P^*$ defined by
the second-order formula
\[  P^*(x_1, \dots, x_n) = \forall P.\; (\bigwedge{\cal R}[P]) \IMPL P(x_1, \dots x_n) \]
where $\bigwedge{\cal R}[P]$ is the conjunction of all the rules in ${\cal R}[P]$.

$P^*$ can be understood as the least predicate satisfying the set of $P$-rules
and is the predicate that represents ``all that is known about $P$ and
assuming nothing else about $P$ is true'', and corresponds to the notion of
exhaustiveness prevalent in law texts. It can also be understood as the static
equivalent of the operational concept of negation as failure for predicate
$P$.  By the Knaster-Tarski theorem, $P^*$, as the least fixpoint of a
monotonic functional, is consistent (see a counterexample in
\exampleref{ex:syntactically_non_monotonic_rule}). 

Obviously, a second-order formula such as the definition of $P^*$ is unwieldy
in fully automated theorem proving, so we derive one particular consequence:

\begin{lemma}\label{lemma:p_star}
$P^*(x_1, \dots, x_n) \IMPL Pre_1[P^*](x_1, \dots x_n) \OR \dots \OR Pre_k[P^*](x_1, \dots x_n)$
\end{lemma}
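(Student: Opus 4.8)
The plan is to read the lemma as the assertion that $P^*$ is a \emph{postfixed point} of the one-step operator generated by the $P$-rules, and to obtain it from the fact that, by its second-order definition, $P^*$ is the \emph{least prefixed point}. Throughout I write $\overrightarrow{x}$ for $x_1,\dots,x_n$ and abbreviate the functional by $\Phi(P)(\overrightarrow{x}) := Pre_1[P](\overrightarrow{x}) \OR \dots \OR Pre_k[P](\overrightarrow{x})$, so that $\bigwedge {\cal R}[P]$ is exactly the statement $\forall \overrightarrow{x}.\, \Phi(P)(\overrightarrow{x}) \IMPL P(\overrightarrow{x})$ that ``$P$ is a prefixed point'', whereas the conclusion to be proved is $\forall \overrightarrow{x}.\, P^*(\overrightarrow{x}) \IMPL \Phi(P^*)(\overrightarrow{x})$. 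I will lean on the semantic monotonicity of each $Pre_i$ in $P$ --- the standing hypothesis behind the Knaster--Tarski remark just before the lemma --- which propagates to $\Phi$: whenever $P \IMPL P'$ holds pointwise, so does $\Phi(P) \IMPL \Phi(P')$.

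First I would prove the auxiliary fact that $P^*$ is itself a prefixed point, i.e.\ $\Phi(P^*) \IMPL P^*$ pointwise. Fixing $\overrightarrow{x}$ and assuming $\Phi(P^*)(\overrightarrow{x})$, I unfold the definition of $P^*$ and take an arbitrary predicate $P$ satisfying $\bigwedge {\cal R}[P]$, with the goal $P(\overrightarrow{x})$. Because $P^*$ is defined as a universal quantification over \emph{all} predicates, it sits below every prefixed point: instantiating that quantifier with the chosen $P$ and discharging the hypothesis $\bigwedge {\cal R}[P]$ gives $P^* \IMPL P$ pointwise. Monotonicity of $\Phi$ then yields $\Phi(P^*) \IMPL \Phi(P)$ pointwise, so the assumption $\Phi(P^*)(\overrightarrow{x})$ delivers $\Phi(P)(\overrightarrow{x})$, and the prefixed-point property of $P$ converts this into $P(\overrightarrow{x})$. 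As $P$ was arbitrary, $P^*(\overrightarrow{x})$ follows.

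The main step then feeds $\Phi(P^*)$ back into the minimality of $P^*$. Setting $P' := \Phi(P^*)$, I would check that $P'$ is again a prefixed point: from the auxiliary fact $P' = \Phi(P^*) \IMPL P^*$ together with monotonicity, $\Phi(P') \IMPL \Phi(P^*) = P'$, i.e.\ $\bigwedge {\cal R}[P']$ holds. Instantiating the quantified predicate in the definition of $P^*$ with this particular $P'$ gives $P^*(\overrightarrow{x}) \IMPL P'(\overrightarrow{x})$, and since $P'(\overrightarrow{x})$ is by construction $Pre_1[P^*](\overrightarrow{x}) \OR \dots \OR Pre_k[P^*](\overrightarrow{x})$, this is precisely the claim of the lemma.

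The delicate point, and the one I expect to be the main obstacle, is the reliance on monotonicity: both the auxiliary prefixed-point step and the reinsertion of $P'$ into each $Pre_i$ use that $Pre_i[\cdot]$ is monotone in its predicate argument, and the lemma silently inherits this hypothesis from the surrounding Knaster--Tarski discussion (the excerpt itself flags, via the counterexample on syntactically non-monotonic rules, that without it $P^*$ need not even be consistent). I would therefore make the monotonicity assumption explicit, and be careful that the functional substitution $Pre_i[P']$ is understood as replacing the predicate parameter $P$ uniformly by the \emph{defined} predicate $P' = \Phi(P^*)$, so that the pointwise implications compose correctly underneath the outer quantifier $\forall \overrightarrow{x}$.
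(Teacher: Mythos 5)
Your proof is correct: it is the standard Knaster--Tarski argument (the least prefixed point of a monotone operator is also a postfixed point, obtained by showing first that $P^*$ is a prefixed point and then instantiating the second-order quantifier at $\Phi(P^*)$), which is exactly what the paper implicitly appeals to --- the lemma is stated there without proof, as a consequence of the second-order definition of $P^*$ and the preceding Knaster--Tarski remark. You are also right to make monotonicity of the $Pre_i[\cdot]$ an explicit hypothesis: the paper's own counterexample $\NOT P \IMPL P$ (where $P^* = \TRUE$ but $Pre_1[P^*] = \FALSE$) shows the lemma fails without it.
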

As a consequence of the Löwenheim–Skolem theorem, there is no first-order
equivalent of $P^*$: a formula of the form $P^*$ can characterize the natural
numbers up to isomorphism, but no first-order formula can.

In the absence of such a first-order equivalent, we define the formula $Inv_P$
\[
\forall x_1, \dots, x_n.\;  P(x_1, \dots, x_n) \IMPL Pre_1(x_1, \dots x_n) \OR \dots \OR Pre_k[P](x_1, \dots x_n)
\]
called the \emph{inversion formula of  $P$}, and take it as an approximation of the
effect of $P^*$ in \lemmaref{lemma:p_star}.

As usual, a disjunction over an empty set is taken to be the falsum
$\bot$. Assume there are no defining rules for a predicate $P$, then $Inv_P =
P \IMPL \bot = \NOT P$, which corresponds to a closed-world assumption for $P$.

\begin{example}\label{ex:syntactically_non_monotonic_rule}
  One motivation for the monotonicity constraint is the following: The
  simplest example of a rule that is not syntactically monotonic is
  $\NOT P \IMPL P$. Its inversion is $P \IMPL \NOT P$. The two formulas
  together, $P \IFF \NOT P$, are inconsistent.
\end{example}

Inversion formulas can be automatically derived and added to the rule set in
L4 proofs; they turn out to be essential for consistency properties. For
example, the functionality of \texttt{maxSp} stated in \figref{fig:assertions}
is not provable without the inversion formula of \texttt{maxSp}.

To avoid misunderstandings, we should emphasize that this approach is entirely
based on a classical monotonic logic, in spite of non-monotonic
effects. Adding a new $P$-rule may invalidate previously provable facts, but
this is only so because the new rule alters the inversion formula of $P$.

\subsection{Comparison}\label{sec:comparison}

One may wonder whether, starting from the same set of rules, the transformations in
\secref{sec:restr_precond} and \secref{sec:restr_deriv} produce
equivalent rules. On the face of it, this is not so, because the
transformation via derivability modifies the arity of the predicates, so the
rule sets have different models.

We will however show that the two rule sets have corresponding sets of
models. This will be made more precise in the following. To fix notation,
assume ${\cal R}_M$ to be a set of rules annotated with rule modifiers. Let
${\cal R}_P$ be the set of rules obtained from ${\cal R}_M$ through the rule
translation via preconditions of \secref{sec:restr_precond}, and similarly
${\cal R}_D$ the set of rules obtained from ${\cal R}_M$ through the rule
translation via derivability of \secref{sec:restr_deriv}. From these rule
sets, we obtain formula sets ${\cal F}_P$ respectively ${\cal F}_D$ by
\begin{itemize}
\item translating rules to formulas;
\item adding inversion formulas $Inv_C$ for all
  the transformable predicates $C$ of the rule set;
\end{itemize}

\begin{lemma}\label{lemma:mp_to_md}
  Any model ${\cal M}_P$ of ${\cal F}_P$ can be transformed into a model
  ${\cal M}_D$ of ${\cal F}_D$.
\end{lemma}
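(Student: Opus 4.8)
The plan is to build $\mathcal{M}_D$ by keeping the universe, the interpretation of every non-transformable symbol, and the interpretation of every original arity-$n$ predicate exactly as in $\mathcal{M}_P$, and only supplying interpretations for the new arity-$(n{+}1)$ predicates $P^+$. For each transformable predicate $P$ let $r_1^P, \dots, r_k^P$ be its defining rules, and write $Pre_i^P$ for the fully elaborated, normalized precondition that the precondition translation of \secref{sec:restr_precond} attaches to $r_i^P$. Interpreting the sort $\texttt{Rulename}_P$ as the finite set $\{r_1^P, \dots, r_k^P\}$ (each constant denoting itself), I would set
\[
  \mathcal{M}_D \models P^+(c, \overrightarrow{x}) \quad\text{iff}\quad c = r_i^P \text{ for some } i \text{ and } \mathcal{M}_P \models Pre_i^P(\overrightarrow{x}).
\]
Because this reads the interpretation of $P^+$ directly off the fixed model $\mathcal{M}_P$, it is a one-shot definition with no fixpoint to untangle; the $Pre_i^P$ are finite formulas already produced by the terminating, $\prec_R$-ordered elaboration.

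The heart of the argument is a single bridging invariant: for every transformable $P$,
\[
  \mathcal{M}_P \models P(\overrightarrow{x}) \quad\text{iff}\quad \mathcal{M}_D \models \exists y.\, P^+(y, \overrightarrow{x}).
\]
By construction the right-hand side is $\bigvee_i Pre_i^P(\overrightarrow{x})$ evaluated in $\mathcal{M}_P$, so the invariant amounts to $\mathcal{M}_P \models (P(\overrightarrow{x}) \IFF \bigvee_i Pre_i^P(\overrightarrow{x}))$. Its backward direction is supplied by the defining rules $Pre_i^P \IMPL P$ of $\mathcal{F}_P$, and its forward direction by the inversion formula $Inv_P \in \mathcal{F}_P$; since $\mathcal{M}_P \models \mathcal{F}_P$, both hold. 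This is exactly where the inversion formulas of \secref{sec:rule_inversion} are indispensable: without $Inv_P$ only one inclusion would survive and the bridge would break for predicates occurring positively in preconditions (including recursive occurrences of $P$ itself, which the invariant handles uniformly with all other transformable atoms).

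With the invariant in hand I would check that $\mathcal{M}_D$ satisfies every formula of $\mathcal{F}_D$. The non-transformable formulas are unchanged and carry over verbatim. For a $P$-rule, the two translations differ only in the conjuncts added by $\mathtt{restrictSubjectTo}$: \secref{sec:restr_precond} appends $\NOT Pre_j$, whereas \secref{sec:restr_deriv} appends $\NOT P^+(r_j, \overrightarrow{e}_j)$, where $\overrightarrow{e}_j$ are the (possibly constant-bearing) conclusion arguments of $r_j$. These conjuncts coincide in $\mathcal{M}_D$, since by construction $P^+(r_j, \overrightarrow{e}_j)$ unfolds to $Pre_j^P$ with the conclusion equalities of $r_j$ instantiated — precisely the normalized content of $Pre_j$. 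Likewise every positive occurrence of a transformable $Q$ in a precondition, which the derivability translation turns into $\exists y.\, Q^+(y, \cdot)$, matches $Q$ by the bridging invariant. Hence the derivability precondition $Pre_i^+$ holds in $\mathcal{M}_D$ exactly when $Pre_i^P$ holds in $\mathcal{M}_P$; combined with the fact that $P^+(r_i, \cdot)$ was defined to be $Pre_i^P$, both the implication rule $Pre_i^+ \IMPL P^+(r_i, \overrightarrow{x})$ and the $\mathcal{F}_D$-inversion formula $P^+(y, \overrightarrow{x}) \IMPL \bigvee_i (y = r_i \AND Pre_i^+)$ collapse to identities.

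The step I expect to be most delicate is this precise matching of the $\mathtt{restrictSubjectTo}$ conjuncts, i.e. establishing $P^+(r_j, \overrightarrow{e}_j) \IFF Pre_j$ under the interpretation. One must track the normalization that replaces a conclusion constant (such as the speed $90$) by a fresh variable plus an equality, and verify that evaluating $P^+(r_j, \cdot)$ at exactly those conclusion arguments collapses the equality and reproduces $Pre_j$; the rule-name indexing must also be respected so that only derivation \emph{by the named rule} $r_j$, and not by some other $P$-rule, enters the conjunct, as intended in \secref{sec:restr_deriv}. A secondary bookkeeping subtlety worth a remark is the treatment of a transformable predicate occurring \emph{negatively} in a precondition: there the universal rule-name binding must be read as the closed-world negation $\NOT\exists y.\, Q^+(y,\cdot)$ rather than $\exists y.\, \NOT Q^+(y,\cdot)$, so that the bridging invariant still matches it with $\NOT Q$.
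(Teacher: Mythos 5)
Your construction of ${\cal M}_D$ is the same as the paper's: keep everything from ${\cal M}_P$, interpret $\texttt{Rulename}_C$ as the set of rule names concluding $C$, and put $(rn, a_1,\dots,a_n)$ into $(C^+)^{{\cal M}_D}$ exactly when the elaborated precondition of rule $rn$ holds of $(a_1,\dots,a_n)$ in ${\cal M}_P$. Where you diverge is in the verification step. The paper argues by well-founded induction over the rule order $\prec_R$: when a negated-precondition conjunct $pre_P^i$ of an ${\cal F}_P$-rule is satisfied, the induction hypothesis for the strictly smaller rule $r_P^i$ delivers satisfaction of its postcondition and hence falsification of the corresponding $\NOT post_D^i(rn_i)$ conjunct in ${\cal F}_D$. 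You instead read that falsification directly off the definition of $(C^+)^{{\cal M}_D}$, which makes $C^+(rn_i,\cdot)$ literally coextensive with $pre_P^i$, and you concentrate the work in a bridging invariant $P(\overrightarrow{x}) \IFF \exists y.\, P^+(y,\overrightarrow{x})$ whose forward direction comes from the inversion formula $Inv_P \in {\cal F}_P$. This buys you a correct treatment of transformable predicates occurring inside the $pre^o$ parts of preconditions, where the derivability translation inserts a quantified rule-name argument --- a case the paper's proof passes over in silence; the price is that this direction of the correspondence now also leans on the inversion formulas, whereas the paper reserves their ``decisive role'' for \lemmaref{lemma:md_to_mp}. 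Your closing observations on normalizing conclusion constants and on reading negative occurrences as $\NOT\exists y.\, Q^+(y,\cdot)$ rather than $\exists y.\, \NOT Q^+(y,\cdot)$ flag genuine ambiguities in the paper's setup rather than defects in your own argument. In short: same model, a different and somewhat more careful route to checking it.
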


The proof will be given in \secref{sec:comparison_proofs} after  \lemmaref{lemma:mp_to_md_with_proof}.

\begin{lemma}\label{lemma:md_to_mp}
  Any model ${\cal M}_D$ of ${\cal F}_D$ can be transformed into a model
  ${\cal M}_P$ of ${\cal F}_P$.
\end{lemma}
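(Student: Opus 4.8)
The plan is to build $\mathcal{M}_P$ from $\mathcal{M}_D$ by \emph{projecting away} the rule-name argument that the derivability translation of \secref{sec:restr_deriv} introduces. Concretely, $\mathcal{M}_P$ reuses the domains and interpretations of $\mathcal{M}_D$ for every sort, function and non-transformable predicate of the original signature, and for each transformable predicate $P$ of arity $n$ we set
\[
P^{\mathcal{M}_P}(\vec{x}) \;:=\; \exists\, rn:\mathtt{Rulename}_P.\; (P^+)^{\mathcal{M}_D}(rn, \vec{x}),
\]
reading ``$P$ holds iff it is derivable by \emph{some} rule''. We then have to check that $\mathcal{M}_P$ validates both components of $\mathcal{F}_P$: the precondition-translated rules of $\mathcal{R}_P$ and the inversion formulas $Inv_C$.

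The crux is a single lemma, proved by well-founded induction on the elaboration order $\prec_R$ of \secref{sec:preprocessing}. For every rule $r$ with name $rn_r$, conclusion predicate $C_r$, derivability precondition $Pre_r^D$ and precondition-world precondition $Pre_r^P$, and for every argument tuple $\vec{x}$, we claim
\[
(C_r^+)^{\mathcal{M}_D}(rn_r, \vec{x}) \;=\; (Pre_r^D)^{\mathcal{M}_D}(\vec{x}) \;=\; (Pre_r^P)^{\mathcal{M}_P}(\vec{x}).
\]
The first equality holds because the rule $r$ itself (a formula of $\mathcal{F}_D$) gives direction $\Leftarrow$, while $Inv_{C_r^+}\in\mathcal{F}_D$ gives $\Rightarrow$; here we use that rule-name constants are pairwise distinct, so that fixing the first argument to $rn_r$ selects exactly the disjunct contributed by $r$, and that the normalized form makes the conclusion arguments coincide with the rule's bound variables. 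The second equality is a structural comparison of the two preconditions: the atoms inherited from the original precondition carry, in $Pre_r^D$, a universally bound rule-name variable occurring only in the antecedent, which by the normalization equivalence $(\forall x.\; \phi(x) \IMPL \psi) = (\exists x.\; \phi(x)) \IMPL \psi$ collapses to exactly the existential defining $P^{\mathcal{M}_P}$; and each restriction conjunct $\NOT C_{r'}^+(rn_{r'}, \vec{x})$ of $Pre_r^D$ is matched against the conjunct $\NOT Pre_{r'}^P(\vec{x})$ of $Pre_r^P$ using the induction hypothesis for the overriding rule $r' \prec_R r$.

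With the lemma in hand, the two verifications are short. For a rule $\forall \vec{x}.\; Pre_r^P(\vec{x}) \IMPL C_r(\vec{x})$ of $\mathcal{R}_P$: if $(Pre_r^P)^{\mathcal{M}_P}(\vec{x})$ holds then by the lemma $(C_r^+)^{\mathcal{M}_D}(rn_r, \vec{x})$ holds, whence $\exists rn.\; C_r^+(rn, \vec{x})$, which is $C_r^{\mathcal{M}_P}(\vec{x})$. For the inversion formula $Inv_C$: if $C^{\mathcal{M}_P}(\vec{x})$ then some $rn$ witnesses $C^+(rn,\vec{x})$ in $\mathcal{M}_D$; applying $Inv_{C^+}$ there yields $rn = rn_{r_i}$ and $(Pre_{r_i}^D)^{\mathcal{M}_D}(\vec{x})$ for some rule $r_i$ concluding $C$, and the lemma converts this into $(Pre_{r_i}^P)^{\mathcal{M}_P}(\vec{x})$, i.e.\ the required disjunct of $Inv_C$.

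We expect the main obstacle to be the second equality of the lemma, that is, reconciling the two structurally different renderings of a restriction: the derivability translation refers \emph{atomically} to the named derived predicate $C_{r'}^+(rn_{r'}, \cdot)$, whereas the precondition translation \emph{inlines} the already elaborated, possibly deeply nested precondition of $r'$. Bridging the two requires precisely the per-rule characterization $C_{r'}^+(rn_{r'}, \cdot) \Leftrightarrow Pre_{r'}(\cdot)$, which becomes available only once both the rule and its inversion formula are assumed, and the induction along $\prec_R$ is what lets the inlined form on the $\mathcal{R}_P$ side line up with the referenced form on the $\mathcal{R}_D$ side. Attention to the distinctness of rule-name constants and to the normalized form is what makes the first equality and the treatment of the original-precondition atoms go through cleanly.
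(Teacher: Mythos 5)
Your proposal is correct and follows essentially the same route as the paper's own (sketched) proof: the same model construction that projects away the rule-name argument, the same well-founded induction on $\prec_R$, and the same decisive use of the inversion formulas $Inv_{C^+}$ to recover, from a satisfied $C^+(rn_{r'},\vec{x})$, the specific precondition of the overriding rule $r'$ and hence falsify the corresponding negated conjunct on the precondition side. Your write-up is in fact more explicit than the paper's sketch --- notably in isolating the per-rule equivalence $C_{r'}^+(rn_{r'},\cdot) \Leftrightarrow Pre_{r'}(\cdot)$ as a standalone induction invariant, in flagging the needed distinctness of rule-name constants, and in checking that the inversion formulas $Inv_C$ themselves hold in ${\cal M}_P$, which the paper leaves implicit.
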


The proof will be given in \secref{sec:comparison_proofs} after
\lemmaref{lemma:md_to_mp_with_proof}

We should emphasize that, in the proof of \lemmaref{lemma:md_to_mp}, the
inversion formulas play a decisive role.


\section{Defeasible Reasoning with Answer Set Programming}\label{sec:defeasible_asp}

\subsection{Introduction}
The purpose of this section is to give an account of the work we have been doing using Answer Set Programming (ASP) to formalize and reason about legal rules. This approach is complementary to the one described before using SMT solvers. Here we will not go too much into the details of how various L4 language constructs map to the ASP formalisation. Our intention, rather, is to present how some core legal reasoning tasks can be implemented in ASP while keeping the ASP representation readable and intuitive and respecting the idea of having an `isomorphism' between the rules and the encoding. Going forward, our intention is to develop a method to compile L4 code to a suitable ASP representation like the one we shall now present. We formalize the notion of what it means to `satisfy' a rule set. We will do this in a way that is most amenable to ASP. Please see the appendix for a brief overview of ASP and references for further reading.\\

Our work in this section is inspired by \cite{DBLP:conf/iclp/WanGKFL09}. Readers will note that there are similarities between the use of predicates such as $according\_to$, $defeated$, $opposes$ in our ASP encoding, to reason about rules interacting with each other, and similar predicates that the authors of \cite{DBLP:conf/iclp/WanGKFL09} use in their work. However our ASP implementation is much more specific to legal reasoning whereas they seek to implement very general logic based reasoning mechanisms. We independently developed our `meta theory' for how rule modifiers interact with the rules and with each other and there are further original contributions like a proposed axiom system for what we call `legal models'. An interesting avenue of future work could be to compare our approaches within the framework of legal reasoning.

The work in this section builds on the work in \cite{morris21:_const_answer_set_progr_tool} and hence uses some of the same notation and terminology. The author of \cite{morris21:_const_answer_set_progr_tool} was a member of the same research group as the authors of this paper at SMU in 2020--2021.

\subsection{Formal Setup}
Let the tuple $Config = (R,F,M,I)$ denote a $configuration$ of legal rules. The set $R$ denotes a set of rules of the form $pre\_con(r)\rightarrow concl(r)$. These are `naive' rules with no information pertaining to any of the other rules in $R$. $F$ is a set of positive atoms that describe facts of the legal scenario we wish to consider. $M$ is a set of the binary predicates $despite$, $subject\_to$ and $strong\_subject\_to$. $I$ is a collection of minimal inconsistent sets of positive atoms. Henceforth for a rule $r$, we may write $C_{r}$ for its conclusion $Concl(r)$.

Note that, throughout this section, given any rule $r$, $C_{r}$ is assumed to be a single positive atom. That is, there are no disjunctions or conjunctions in rule conclusions. Also any rule pre-condition ($pre\_con(r)$) is assumed to be a conjunction of positive and negated atoms. Here negation denotes `negation as failure'.  

Throughout this document, whenever we use an uppercase or lowercase letter (like $r$, $r_{1}$, $R$ etc.) to denote a rule that is an argument, in a binary predicate, we mean the unique integer rule \textsc{id} associated with that rule. The binary predicate $legally\_valid(r,c)$ intuitively means that the rule $r$ is `in force' and it has conclusion $c$. Here $r$ typically is an integer referring to the rule \textsc{id} and $c$ is the atomic conclusion of the rule. The unary predicate $is\_legal(c)$ intuitively means that the atom $c$ legally holds/has legal status. The predicates $despite$, $subject\_to$ and $strong\_subject\_to$ all cause some rules to override others. Their precise properties will be given next.

\subsection{Semantics}

A set $S$ of $is\_legal$ and $legally\_valid$ predicates is called a \textit{legal model} of $Config = (R,F,M,I)$, if and only if
\begin{description}
\item[(A1)]$\forall f \in F$ $is\_legal(f) \in S$.

\item[(A2)] $\forall r \in R$, if $legally\_valid(r,C_{r}) \in S$. then $S\models is\_legal(pre\_con(r))$ and $S\models is\_legal(C_{r})$ \footnote{By $S\models is\_legal(pre\_con(r))$ we mean that for each positive atom $b_{i}$ in the conjunction, $is\_legal(b_{i}) \in S$ and for each negated body atom $not$ $b_{j}$ in the conjunction $is\_legal(b_{j})\notin S$ }

\item[(A3)] $\forall c$, if $is\_legal(c) \in S$, then either $c\in F$ or there exists $r \in R$ such that $legally\_valid(r,C_{r}) \in S$ and $c= C_{r}$.

\item[(A4)] $\forall r_{i}, r_{j} \in R$, if $despite(r_{i}, r_{j}) \in M$ and $S\models is\_legal(pre\_con(r_{j}))$, then $legally\_valid(r_{i},C_{r_{i}}) \notin S$

\item[(A5)] $\forall r_{i}, r_{j} \in R$, if $strong\_subject\_to(r_{i}, r_{j}) \in M$ and $legally\_valid(r_{i},C_{r_{i}}) \in S$, then $legally\_valid(r_{j},C_{r_{j}}) \notin S$

\item[(A6)] $\forall r_{i},r_{j} \in R$ if $subject\_to(r_{i},r_{j}) \in M$, and $legally\_valid(r_{i},C_{r_{i}}) \in S$ and there exists a minimal conflicting set $k \in I$ such that $C_{r_{i}} \in k$ and $C_{r_{j}}\in k$ and $is\_legal(k\setminus \{C_{r_{j}})\})\subseteq S $, then $legally\_valid(r_{j},C_{r_{j}}) \notin S$. Note than in our system, any minimal inconsistent set must contain at least 2 atoms. \footnote{For a set of atoms $A$, by $is\_legal(A)$, we mean the set $\{is\_legal(a)\mid a\in A\}$} 

\item[(A7)] $\forall r\in R$, if $S\models pre\_con(r)$, but $legally\_valid(r,C_{r})\notin S$, then it must be the case that at least one of A4 or A5 or A6 has caused the exclusion of $legally\_valid(r,C_{r})$. That is if $S\models pre\_con(r)$, then unless this would violate one of A5, A6 or A7, it must be the case that $legally\_valid(r,C_{r})\in S$.
\end{description}

\subsection{Some remarks on axioms A1--A7}
We now give some informal intuition behind some of the axioms and their intended effects.

A1 says that all facts in $F$ automatically gain legal status, that is, they legally hold. The set $F$ represents indisputable facts about the legal scenario we are considering.

A2 says that if a rule is `in force' then it must be the case that both the pre-condition and conclusion of the rule have legal status. Note that it is not enough if simply require that the conclusion has legal status as more than one rule may enforce the same conclusion or the conclusion may be a fact, so we want to know exactly which rules are in force as well as their conclusions.

A3 says that anything that has legal status must either be a fact or be a conclusion of some rule that is in force.

A4--A6 describe the semantics of the three modifiers. The intuition for the three modifiers will be discussed next. Firstly, it may help the reader to read the modifiers in certain ways. $despite(r_{i},r_{j})$ should be read as `despite $r_{i}$, $r_{j}$'. Thus $r_{i}$ here is the `subordinate rule' and $r_{j}$ is the `dominating' rule. The idea here is that once the precondition of the dominating rule $r_{j}$ is true, it invalidates the subordinate rule $r_{i}$ regardless of whether the dominating rule itself is then invalidated by some other rule. For \textit{strong subject to}, the intended reading for $strong\_subject\_to(r_{i},r_{j})$ is something like `(strong) subject to $r_{i}$, $r_{j}$'. Here $r_{i}$ can be considered the dominating rule and $r_{j}$ the subordinate. Once the dominating rule is in force, then it invalidates the subordinate rule. The intended reading for $subject\_to(r_{i},r_{j})$ is `subject to $r_{i}$, $r_{j}$'. For the subordinate rule $r_{j}$ to be invalidated, it has to be the case that the dominating rule $r_{i}$ is in force and there is a minimal inconsistent set $k$ in $I$ that contains the two atoms in the conclusions of the two rules and, $i\setminus\{C_{r_{j}}\}\subseteq S$. These minimal inconsistent sets along with the \textit{subject to} modifier give us a way to incorporate a classical-negation-like effect into our system. We are able to say which things contradict each other. Note that in our system, if say $\{a,b\}$ is a minimal inconsistent set, then it is possible for both $is\_legal(a)$ and $is\_legal(b)$ to be in a single legal model, if they are both facts or they are conclusions of rules that have no modifiers linking them. These minimal inconsistent sets only play a role where a $subject\_to$ modifier is involved. The reason for doing this is that this offers greater flexibility rather than treating $a$ and $b$ as pure logical negatives of each other that cannot be simultaneously true in a legal model. We will give examples later on to illustrate these modifiers.

A7 says essentially that A4--A6 represent the only ways in which a rule whose pre-condition is true may nevertheless be invalidated, and any rule whose precondition is satisfied and is not invalidated directly by some instance of A4--A6, must be in force.

\subsection{Non-existence of legal models}
Note that there may be configurations for which no legal models exist. This is most easily seen in the case where there is only one rule, the pre-condition of that rule is given as fact, and the rule is strongly subject to itself. See the appendix for some further examples of `pathological' rule configurations.

\subsection{ASP encoding}
Here is an ASP encoding scheme given a configuration $Config = (R,F,M,I)$ of legal rules.
\begin{lstlisting}[language=Prolog, numbers=left]
% For any f in F, we have:
is_legal(f). 

% All the modifiers get added as facts like for example:
despite(1,2).

% Any rule r in R is encoded using the general schema:
according_to(r,C_r):-is_legal(pre_con(r)).

% Given a minimal inconsistent set {a_1,a_2,...,a_n}, this corresponds to a set of rules:
opposes(a_1,a_2):-is_legal(a_2),is_legal(a_3),...,is_legal(a_n).
opposes(a_1,a_3):-is_legal(a_2),is_legal(a_4)...,is_legal(a_n).
               .
               .
               .
opposes(a_n-1,a_n):-is_legal(a_1),...,is_legal(a_n-2).               

% Opposes is a symmetric relation
opposes(X,Y):-opposes(Y,X).


% Encoding for 'despite'
defeated(R,C,R1) :-
    according_to(R,C), according_to(R1,C1), despite(R,R1).

%Encoding for 'subject_to'
defeated(R,C,R1) :-
    according_to(R,C), legally_valid(R1,C1),
    opposes(C,C1), subject_to(R1,R).

% Encoding for 'strong_subject_to'
defeated(R,C,R1) :-
    according_to(R,C), legally_valid(R1,C1),
    strong_subject_to(R1,R).

not_legally_valid(R) :- defeated(R,C,R1).

legally_valid(R,C):-according_to(R,C),not not_legally_valid(R).

is_legal(C):-legally_valid(R,C).
\end{lstlisting}
\subsection{Lemma}

\begin{lemma}\label{lemma:legal_model_of_config}
For a configuration $Config=(R,F,M,I)$, let the above encoding be the program $ASP_{Config}$. Then given an answer set $A_{Config}$ of $ASP_{Config}$ let $S_{A_{Config}}$ be the set of $is\_legal$ and $legally\_valid$ predicates in $A_{Config}$. Then $S_{A_{Config}}$ is a legal model of $Config$. 
\end{lemma}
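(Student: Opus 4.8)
The plan is to take an arbitrary answer set $A := A_{Config}$ of $ASP_{Config}$, let $S := S_{A_{Config}}$ be its restriction to the $is\_legal$ and $legally\_valid$ atoms, and verify the seven defining conditions (A1)--(A7) of a legal model one at a time. The whole argument rests on two standard properties of a stable model $A$ of a program $P$: \emph{closure} --- $A$ is a classical model of $P$, so whenever the body of a ground rule instance holds in $A$ its head lies in $A$; and \emph{supportedness} --- since $A$ is the minimal model of the reduct $P^{A}$, every atom of $A$ is the head of some rule instance whose reduct body is satisfied in $A$. Closure is what I would use to establish the ``forcing'' direction, supportedness for the ``exclusion'' axioms.

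The three base axioms are direct. (A1) holds because each $f\in F$ contributes the fact $is\_legal(f)$, which lies in every model and hence in $S$. For (A2), if $legally\_valid(r,C_{r})\in S$ then by supportedness the only possible justification is the clause $legally\_valid(R,C)\leftarrow according\_to(R,C),\ \mathrm{not}\,not\_legally\_valid(R)$, so $according\_to(r,C_{r})\in A$; this in turn must be supported by $according\_to(r,C_{r})\leftarrow is\_legal(pre\_con(r))$, whose satisfaction in $A$ is exactly $S\models is\_legal(pre\_con(r))$ (positive conjuncts read as membership, negation-as-failure conjuncts as non-membership, per the footnote), and the clause $is\_legal(C)\leftarrow legally\_valid(R,C)$ gives $is\_legal(C_{r})\in S$. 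For (A3), if $is\_legal(c)\in S$, supportedness leaves only the fact $is\_legal(f)$ (whence $c=f\in F$) or the clause $is\_legal(C)\leftarrow legally\_valid(R,C)$ (whence some $legally\_valid(r,c)\in S$, and by construction $c=C_{r}$).

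The exclusion axioms (A4)--(A6) I would prove by contradiction, each time deriving a $defeated$ atom and then contradicting the supportedness of a $legally\_valid$ atom through the negated $not\_legally\_valid$ literal. For (A4), assuming $legally\_valid(r_{i},C_{r_{i}})\in S$ yields $according\_to(r_{i},C_{r_{i}})\in A$; together with $according\_to(r_{j},C_{r_{j}})\in A$ (forced by closure from $S\models is\_legal(pre\_con(r_{j}))$) and the fact $despite(r_{i},r_{j})$, the first $defeated$ clause gives $defeated(r_{i},C_{r_{i}},r_{j})$, hence $not\_legally\_valid(r_{i})\in A$ --- contradicting the support of $legally\_valid(r_{i},C_{r_{i}})$. (A5) is identical through the third $defeated$ clause. (A6) is the most delicate: here I must check the $opposes$ clauses fire. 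Reading the body of an $opposes$ clause as the demand that every element of $k$ other than the two atoms in its head be legal, the clause with head $opposes(C_{r_{j}},C_{r_{i}})$ becomes applicable because the hypothesis $is\_legal(k\setminus\{C_{r_{j}}\})\subseteq S$ supplies legality of $k\setminus\{C_{r_{i}},C_{r_{j}}\}$ while the $legally\_valid(r_{i},C_{r_{i}})$ literal in the second $defeated$ clause supplies $is\_legal(C_{r_{i}})$; the clause then defeats $r_{j}$ as before. Finally (A7) uses closure: if $S\models pre\_con(r)$ then $according\_to(r,C_{r})\in A$, so were $legally\_valid(r,C_{r})\notin A$, the $legally\_valid$ clause (satisfied by $A$) forces $not\_legally\_valid(r)\in A$; supportedness of the latter forces some $defeated(r,C,r_{1})\in A$, and inspection of the three (and only three) $defeated$ clauses shows this instance witnesses exactly one of (A4), (A5) or (A6).

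The step I expect to be the genuine obstacle is making these supportedness arguments airtight in the presence of the non-stratified negation loop $legally\_valid \to \mathrm{not}\,not\_legally\_valid \to defeated \to legally\_valid$: the program is not stratified, so one cannot reason by a naive iterated fixpoint and must appeal to the reduct characterisation of answer sets throughout, taking care that the default-negated precondition literals hidden inside $is\_legal(pre\_con(r))$ and the negated $not\_legally\_valid(R)$ literal are treated consistently when passing to $P^{A}$. A secondary point demanding care is confirming that the $opposes$ encoding, together with its symmetry clause, matches the conflict condition of (A6) for \emph{both} orientations of the pair $(C_{r_{i}},C_{r_{j}})$; this forces one to pin down the precise reading of the $opposes$ clause bodies before (A6) can be closed.
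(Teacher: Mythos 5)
Your proposal is correct and follows essentially the same route as the paper's proof: an axiom-by-axiom verification of (A1)--(A7) by tracing the relevant clauses of the encoding, using the fact that every atom in an answer set must be supported by a clause whose (reduct) body is satisfied, and deriving $defeated$/$not\_legally\_valid$ atoms to rule out $legally\_valid$ atoms for (A4)--(A6). Your treatment is in fact somewhat more explicit than the paper's (which dispatches (A1)--(A4) as "easy to see" and only details (A5)--(A7)), and your remarks on the non-stratified negation loop and on the orientation of the $opposes$ clauses address genuine fine points that the paper leaves implicit.
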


$Proof$ See Appendix. $\square$

\subsection{Example}
Let us now give an example to illustrate the various concepts~/~modifiers discussed above.
Consider 4 basic rules:
\begin{enumerate}
  \item If Bob is wealthy, he must buy a Rolls-Royce.
  \item If Bob is wealthy, he must buy a Mercedes.
  \item If Bob is wealthy, he may spend up to 2 million dollars on cars.
  \item If Bob is extremely wealthy, he may spend up to 10 million dollars on cars.
\end{enumerate}
Suppose we know that the Rolls-Royce and Mercedes together cost more
than 2 million but each is individually less than 2 million. We also
have that rules 1 and 2 are each subject to rule 3 and despite rule 1,
rule 4 holds. Additionally, we also have the fact that Bob is
wealthy. In this situation we would expect 2 legal models. One in which
exactly rule 1 and rule 3 are legally valid and one in which exactly
rule 2 and rule 3 are legally valid. Let us see what our encoding
looks like.
\begin{lstlisting}[language=Prolog, numbers=left]
is_legal(wealthy(bob)).
% Rules
according_to(1,must_buy(rolls,bob)) :- is_legal(wealthy(bob)).
according_to(2,must_buy(merc,bob)) :- is_legal(wealthy(bob)).
according_to(3,may_spend_up_to_one_mill(bob)) :-
    is_legal(wealthy(bob)).
according_to(4,may_spend_up_to_ten_mill(bob)) :-
    is_legal(extremely_wealthy(bob)).

% {(must_buy(rolls,bob),must_buy(merc,bob), may_spend_up_to_one_mill(bob)} is a min. inconsistent set.

opposes(must_buy(rolls,bob),must_buy(merc,bob)) :-
    is_legal(may_spend_up_to_one_mill(bob)).

opposes(must_buy(rolls,bob),may_spend_up_to_one_mill(bob)) :-
    is_legal(must_buy(merc,bob)).

opposes(must_buy(merc,bob),may_spend_up_to_one_mill(bob)) :-
    is_legal(must_buy(rolls,bob)).

opposes(X,Y):-opposes(Y,X).

subject_to(3,1).
subject_to(3,2).
despite(3,4).

% Encoding for 'despite'
defeated(R,C,R1) :-
    according_to(R,C),according_to(R1,C1),despite(R,R1).

% Encoding for 'subject_to'
defeated(R,C,R1) :-
    according_to(R,C), legally_valid(R1,C1),
    opposes(C,C1), subject_to(R1,R).

% Encoding for 'strong_subject_to'
defeated(R,C,R1) :-
    according_to(R,C), legally_valid(R1,C1),
    strong_subject_to(R1,R).

not_legally_valid(R) :- defeated(R,C,R1).

legally_valid(R,C) :- 
    according_to(R,C), not not_legally_valid(R).

is_legal(C):-legally_valid(R,C).
\end{lstlisting}
Running the the program gives exactly 2 answer sets corresponding to the legal models described above. Now if we add say $strong\_subject\_to(3,1)$ to the set of modifiers then we get exactly one legal model/answer set where exactly rule 3 and rule 2 are legally valid but not rule 1 because it has been invalidated due to rule 3 being legally valid with no regard for the minimal inconsistent sets.

Lastly, if we add $extremely\_wealthy(bob)$ to the set of facts, then we get a single legal model/answer set where exactly rule 1, rule 2 and rule 4 are legally valid. This is because the rule 3 has been invalidated and hence there are no constraints now on the validity of rule 1 and rule 2. See appendix for further remarks on this example.


\section{Conclusions}\label{sec:conclusions}

This paper has discussed different approaches for representing defeasibility
as used in law texts, by annotating rules with modifiers that explicate their
relation to other rules. We have notably presented two encodings in classical
logic (\secrefs{sec:restr_precond} and \ref{sec:restr_deriv}) and explained
how they are related (\secref{sec:comparison}). Quite a different approach,
based on Answer Set Programming, is presented in
\secref{sec:defeasible_asp}.

An experimental implementation of the L4 ecosystem is under
way\footnote{\url{https://github.com/smucclaw/baby-l4}}, but it has not yet
reached a stable, user-friendly status. It is implemented in Haskell and features
an IDE based on VS Code and natural language processing via Grammatical
Framework\cite{ranta_grammatical_2004}. Currently, only the coding of
\secrefs{sec:restr_precond} has been implemented---the coding of
\secref{sec:restr_deriv} would require a much more profound transformation of
rules and assertions. Likewise, the ASP coding of \secref{sec:defeasible_asp}
has only been done manually. The interaction with SMT solvers is done through
an SMT-LIB \cite{BarFT_SMTLIB} interface, thus opening the possibility to
interact with a wide range of solvers. As our rules typically contain
quantification, reasoning with quantifiers is crucial, and the best support
currently seems to be provided by Z3 \cite{demoura_bjorner_z3_2008}.

We are still at the beginning of the journey. A theoretical comparison of the
classical and ASP approaches presented here still has to be carried out, and
it has to be propped up by an empirical evaluation. For this purpose, we are
currently in the process of coding some real-life law texts in L4. We are
fully aware of shortcomings of the current L4, which will strongly evolve in
the next months, to include reasoning about deontics and about temporal
relations. Integrating these aspects will not be easy, and this is one reason
for not committing prematurely to one particular logical framework.


\paragraph{Acknowledgements.}
The contributions of the members of the L4 team to this common effort are
thankfully acknowledged, in particular of Jason Morris who contributed his
experience with Answer Set Programming; of Jacob Tan and Ruslan Khafizov who
have participated in discussions about its contents and commented on the
paper; and of Liyana Muthalib who has proof-read a previous version.

This research is supported by the National Research Foundation (NRF),
Singapore, under its Industry Alignment Fund –- Pre-Positioning Programme, as
the Research Programme in Computational Law. Any opinions, findings and
conclusions or recommendations expressed in this material are those of the
authors and do not reflect the views of National Research Foundation,
Singapore.

\newpage
\bibliographystyle{splncs04}
\bibliography{main}

\newpage
\appendix
\section{An Overview of the L4 Language: Details}\label{sec:l4_language_app}

Let us give some more details about the L4 language: its class and type
definition mechanism, and the way it handles proof obligations.

\subsection{Terminology and Class Definitions}\label{sec:classdefs}

The definition in \figref{fig:classdefs} introduces classes for vehicles, days
and roads. 

\begin{figure}[h!]
\begin{lstlisting}
class Vehicle {
   weight: Integer
}
class Car extends Vehicle {
   doors: Integer
}
class Truck extends Vehicle
class SportsCar extends Car

class Day
class Workday extends Day
class Holiday extends Day

class Road
class Highway extends Road
\end{lstlisting}
  \caption{Class definitions of speedlimit example}\label{fig:classdefs}
\end{figure}

Classes are arranged in a tree-shaped hierarchy, having a class named
\texttt{Class} as its top element. Classes that are not explicitly derived
from another class via \texttt{extends} are implicitly derived from
\texttt{Class}. A class $S$ derived from a class $C$ by \texttt{extends} will
be called a subclass of $C$, and the immediate subclasses of \texttt{Class}
will be called \emph{sorts} in the following. Intuitively, classes are meant
to be sets of entitities, with subclasses being interpreted as
subsets. Different subclasses of a class are not meant to be disjoint.

Class definitions can come with attributes, in braces. These attributes can be
of simple type, as in the given example, or of higher type (the notion of type
will be explained in \secref {sec:fundecls}). In a declarative reading,
attributes can be understood as a shorthand for function declarations that
have the class they are defined in as additional domain. Thus, the attribute
\texttt{weight} corresponds to a top-level declaration \texttt{weight: Vehicle
  -> Integer}. In a more operational reading, L4 classes can be understood as
prototypes of classes in object-oriented programming languages, and an
alternative field selection syntax can be used: For \texttt{v: Vehicle}, the
expression \texttt{v.weight} is equivalent to \texttt{weight(v)}, at least
logically, even though the operational interpretations may differ.

\subsection{Types and Function Declarations}\label{sec:fundecls}

L4 is an \emph{explicitly} and \emph{strongly typed} language: all entities
such as functions, predicates and variables have to be declared before being
used. One purpose of this measure is to ensure that the executable sublanguage
of L4, based on the simply-typed lambda calculus with subtyping, enjoys a type
soundness property: evaluation of a function cannot produce a dynamic type
error.

\figref{fig:fundecls} shows two function declarations. Functions with
\texttt{Boolean} result type will sometimes be called \emph{predicates} in the
following, even though there is no syntactic difference. All the declared
classes are considered as elementary types, as well as \texttt{Integer},
\texttt{Float}, \texttt{String} and \texttt{Boolean} (which are internally also treated as
classes). If $T_1, T_2, \dots T_n$ are types, then so are function types
\texttt{$T_1$ -> $T_2$} and tuple types \texttt{($T_1$, $\dots$ ,$T_n$)}. The
type system and the expression language, to be presented later, are
higher-order, but extraction to some solvers will be limited to
(restricted) first-order theories.

\begin{figure}[h]
\begin{lstlisting}
decl isCar : Vehicle -> Boolean
decl maxSp : Vehicle -> Day -> Road -> Integer -> Boolean
\end{lstlisting}
  \caption{Declarations of speedlimit example}\label{fig:fundecls}
\end{figure}

The nexus between the terminological and the logical level is established with
the aid of \emph{characteristic predicates}. Each class $C$ which is a
subclass of sort $S$ gives rise to a declaration \texttt{is$C$: $S$ ->
  Boolean}. An example is the declaration of \texttt{isCar} in
\figref{fig:fundecls}. In the L4 system, this declaration, as well as the
corresponding class inclusion axiom, are generated
automatically.

Two classes derived from the same base class (thus: \texttt{$C_1$ extends $B$}
and \texttt{$C_2$ extends $B$}) are not necessarily disjoint. 

From the subclass relation, a \emph{subtype} relation $\preceq$ can be defined
inductively as follows: if \texttt{$C$ extends $B$}, then $C \preceq B$, and
for types $T_1, \dots, T_n, T_1', \dots, T_n'$,
if $T_1 \preceq T_1', \dots, T_n \preceq T_n'$, 
then \texttt{$T_1'$ -> $T_2 \; \preceq \; T_1$ -> $T_2'$} 
and \texttt{($T_1$, $\dots$, $T_n$) $\preceq$ ($T_1'$, $\dots$, $T_n'$)}.

Without going into details of the type system, let us remark that it has been
designed to be compatible with subtyping: if an element of a type is
acceptable in a given context, then so is an element of a subtype. In
particular,
\begin{itemize}
\item for field selection, if $C'$ is a class having field $f$ of type $T$,
  and $C \preceq C'$, and $c : C$, then field selection is well-typed with $c.f : T$.
\item for function application, if $f: A' \mbox{\texttt{->}} B$ and $a:A$ and
  $A \preceq A'$, then function application is well-typed with $f\; a : B$.
\end{itemize}

\subsection{Assertions}\label{sec:assertions}

Assertions are statements that the L4 system is meant to verify or to
reject -- differently said, they are proof obligations. These assertions are verified
relative to a rule set comprising some or all of the rules and facts stated
before.

The active rule set used for verification can be configured, by adding rules
to or deleting rules from a default set. Assume the active rule set consists
of $n$ rules whose logical representation is $R_1 \dots R_n$, and assume the
formula of the assertion is $A$. The proof obligation can then be checked for
\begin{itemize}
\item  \emph{satisfiability}: in this case, $R_1 \AND \dots \AND R_n \AND A$
  is checked for satisfiability.
\item \emph{validity}: in this case, $R_1 \AND \dots \AND R_n \IMPL A$ is
  checked for validity.
\end{itemize}
In either case, if the proof fails, a model resp.{} countermodel is produced.
In the given example, the SMT solver checks the validity of the formula and
indeed returns a countermodel that leads to contradictory prescriptions of the
maximal speed: if the vehicle is a car, the day a workday and the road a
highway, the maximal speed can be 90 or 130, depending on the rule applied.

The assertion \texttt{maxSpFunctional} of \figref{fig:assertions} can be considered an essential
consistency requirement and a rule system violating it is inconsistent \wrt{} the intended semantics of \texttt{maxSp}. One
remedial action is to declare one of the rules as default and the other rule
as overrriding it.

After this repair action, \texttt{maxSpFunctional} will be provable (under
additional natural conditions described in \secref{sec:rule_inversion}). We can now
continue to probe other consistency requirements, such as exhaustiveness
stating that a maximal speed is defined for every combination of vehicle:

\begin{lstlisting}
assert <maxSpExhaustive>
   exists sp: Integer. maxSp instVeh instDay instRoad sp
\end{lstlisting}

The intended usage scenario of L4 is that by an interplay of proving
assertions and repairing broken rules, one arrives at a rule set satisfying
general principles of coherence, completeness and other, more elusive
properties such as fairness (at most temporary exclusion from essential
resources or rights).


\section{Reasoning with and about Rules - Motivation}\label{sec:resasoning_with_rules_app}

To illustrate the use of rule modifiers discussed in
\secref{sec:resasoning_with_rules}, we consider a realistic law text,
Singapore's Professional Conduct Rules \S~34
\cite{professional_conduct_rules}. This case study has been investigated in
detail in \cite{morris21:_const_answer_set_progr_tool}. Here is an excerpt of
the rules:

\begin{description}
\item[(1)] A legal practitioner must not accept any executive appointment
  associated with any of the following businesses: 
  \begin{description}
  \item[(a)] any business which detracts from, is incompatible with, or
    derogates from the dignity of, the legal profession;
  \item[(b)] any business which materially interferes with the legal
    practitioner’s primary occupation of practising as a lawyer; (\dots)
  \end{description}
\item[(5)] Despite paragraph (1)(b), but subject to paragraph (1)(a) and (c)
  to (f), a locum solicitor may accept an executive appointment in a business
  entity which does not provide any legal services or law-related services, if
  all of the conditions set out in the Second Schedule are satisfied.
\end{description}

The two main notions developed in the Conduct Rules are which executive appointments a legal
practictioner \emph{may} or \emph{must not} accept under which
circumstances. As there is currently no direct support for deontic logics in
L4, these notions are defined as two predicates \texttt{MayAccept} and
\texttt{MustNotAccept}, with the intended meaning that these two notions are
contradictory, and this is indeed what will be provable after a complete
formalization.

Let us here concentrate on the modifiers \emph{despite} and \emph{subject
  to}. A synonym of ``despite'' that is often used in legal texts is
``notwithstanding'',  and a synonym of
``subject to'' is ``except as provided in'', see \cite{adams_contract_drafting_2004}.

The reading of rule (5) is the following:
\begin{itemize}
\item ``subject to paragraph (1)(a) and (c) to (f)'' means: rule (5) applies
  as far as (1)(a) and (c) to (f) is not established. Differently said, rules
  (1)(a) and (c) to (f) undercut or defeat rule (5).

  One way of explicitating the ``subject to'' clause would be to rewrite (5)
  to: ``Despite paragraph (1)(b), provided the business does not detract from,
  is incompatible with, or derogate from the dignity of, the legal profession;
  and provided that not [clauses (1)(c) to (f)]; then a locum
  sollicitor\footnote{in our class-based terminology, a subclass of legal
    practitioner} may accept an executive appointment.''

\item ``despite paragraph (1)(b)'' expresses that rule (5) overrides rule
  (1)(b). In a similar spirit as the ``subject to'' clause, this can be made
  explicit by introducing a proviso, however not locally in  rule (5), but
  remotely in rule (1)(b).

  One way of explicitating the ``despite'' clause of rule (5) would be to
  rewrite (1)(b) to: ``Provided (5) is not applicable, a legal practitioner
  must not accept any executive appointment associated with any business which
  materially interferes with the legal practitioner’s primary occupation of
  practising as a lawyer.''
\end{itemize}

The astute reader will have remarked that the treatment in both cases is
slightly different, and this is not related to the particular semantics of
\emph{subject to} and \emph{despite}: we can state defeasibility
\begin{itemize}
\item either in the form of (negated) preconditions of rules: ``rule $r_1$ is
  applicable if the preconditions of $r_2$ do not hold'';
\item or in the form of (negated) derivability of the postcondition of rules: ``rule $r_1$ is
  applicable if the postcondition of $r_2$ does not hold''.
\end{itemize}


\section{Proofs: Comparison of  Rule Transformation Strategies}\label{sec:comparison_proofs}

We restate and give detailed proofs of two lemmas of \secref{sec:rule_inversion}.

\begin{lemma}\label{lemma:mp_to_md_with_proof}
  Any model ${\cal M}_P$ of ${\cal F}_P$ can be transformed into a model
  ${\cal M}_D$ of ${\cal F}_D$.
\end{lemma}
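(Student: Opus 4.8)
The plan is to reuse ${\cal M}_P$ wholesale on the common vocabulary and to manufacture interpretations only for the new material introduced by the derivability translation. Concretely, ${\cal M}_D$ keeps the domain of ${\cal M}_P$ together with its interpretation of every function and every non-transformable predicate; each new sort $\mathtt{Rulename}_C$ is interpreted as the finite set of names of rules concluding $C$; and the decorated predicates are fixed by the defining clause
\[
  {\cal M}_D \models C^+(r,\vec{x}) \quad\text{iff}\quad r \text{ names a } C\text{-rule } \rho \text{ with } {\cal M}_P \models Pre_\rho(\vec{x}),
\]
where $Pre_\rho$ is the precondition of the corresponding (fully transformed) rule of ${\cal R}_P$. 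I would work throughout with rules in the normalized form of \secref{sec:rule_inversion}, so that a constant argument position in a conclusion (for instance a fixed speed value) becomes an equality constraint inside the precondition; this makes $C^+(r,\vec{x})$ automatically false whenever $\vec{x}$ does not match $r$'s conclusion pattern. The clause above refers only to ${\cal M}_P$, so it is unambiguous even when preconditions mention transformable predicates.

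The crux is a correspondence between the two arities,
\[
  {\cal M}_P \models C(\vec{x}) \quad\text{iff}\quad {\cal M}_D \models \exists r.\; C^+(r,\vec{x}),
\]
which I would prove first. Unfolding the right-hand side through the defining clause reduces it to ${\cal M}_P \models \bigvee_\rho Pre_\rho(\vec{x})$, the disjunction running over the $C$-rules. The direction from the disjunction to $C(\vec{x})$ is immediate from the rules $Pre_\rho \IMPL C$ of ${\cal R}_P$; the converse --- that $C(\vec{x})$ cannot hold unless some precondition does --- is exactly the inversion formula $Inv_C$. Hence establishing the correspondence already uses both ingredients of ${\cal F}_P$, the rules and the inversion formulas.

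With the correspondence available I would prove, by structural induction on preconditions, a translation-alignment lemma: the ${\cal R}_P$-precondition of a rule holds at $\vec{x}$ in ${\cal M}_P$ iff its ${\cal R}_D$-precondition holds at $\vec{x}$ in ${\cal M}_D$. Logical connectives and non-transformable atoms are transparent because the two models agree on them; an ordinary positive occurrence of a transformable $C'$ is translated to $\exists r.\,C'^+(r,\cdot)$ and is discharged by the correspondence; and a \texttt{subjectTo}-induced conjunct, which is the negated precondition $\NOT Pre_{r'}$ of a dominating rule in ${\cal R}_P$ but the negated decorated postcondition $\NOT C'^+(r')$ in ${\cal R}_D$, matches because the defining clause gives $C'^+(r',\vec{x}) \IFF Pre_{r'}(\vec{x})$. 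From the alignment lemma both remaining parts of ${\cal F}_D$ follow at once: each derivability rule $Pre_\rho^D \IMPL C^+(r_\rho,\vec{x})$ holds because its antecedent is equivalent to $Pre_\rho$ in ${\cal M}_P$, which by definition equals its consequent; and each $Inv_{C^+}$ holds because $C^+(r,\vec{x})$ forces, via the defining clause and the alignment, the matching disjunct on its right.

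I expect the main obstacle to be purely in the bookkeeping that makes overriding line up across the two translations: the precondition translation blocks a rule via the \emph{precondition} of the dominating rule, whereas the derivability translation blocks it via that rule's decorated \emph{postcondition}, and the identification $C'^+(r',\vec{x}) \IFF Pre_{r'}(\vec{x})$ that reconciles them depends on normalization to pin down the constant argument positions of $r'$'s conclusion and on reading that conclusion back through its rule. Threading the rule-name indexing and these constant substitutions correctly through the inductive step is where the care lies; the semantic content is carried almost entirely by the correspondence and the inversion formulas, so once the alignment lemma is set up the verification of each clause of ${\cal F}_D$ is routine.
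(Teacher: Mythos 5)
Your construction of ${\cal M}_D$ is exactly the paper's: keep ${\cal M}_P$ on the shared vocabulary, interpret $\mathtt{Rulename}_C$ as the set of names of $C$-rules, and put $(rn,\vec a)$ into $(C^+)^{{\cal M}_D}$ precisely when $\vec a$ satisfies the ${\cal R}_P$-precondition of the rule named $rn$ under ${\cal M}_P$. Where you genuinely diverge is in the verification. The paper argues by well-founded induction over the rule order $\prec_R$ with a case split on whether the precondition of $r_P$ holds: when a negated conjunct $pre_P^i$ is in fact satisfied, the construction puts $(rn_i,\vec a)$ into the decorated predicate, falsifying the matching conjunct $\NOT post_D^i(rn_i)$ and discharging $r_D$ vacuously. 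You instead isolate the correspondence ${\cal M}_P\models C(\vec x)$ iff ${\cal M}_D\models\exists r.\,C^+(r,\vec x)$ --- whose nontrivial direction is exactly $Inv_C$ --- and then run a structural induction on preconditions. The trade-off: your route spends the inversion formulas already in this direction (the paper only flags them as decisive for the converse, \lemmaref{lemma:md_to_mp}), but in exchange it explicitly handles positive occurrences of transformable predicates inside preconditions, which after the derivability translation carry a quantified rule-name argument and which the paper's sketch passes over in silence; your closing observation that for the \texttt{subjectTo}-induced conjuncts the pointwise identity $C'^+(rn',\vec x)\IFF Pre_{rn'}(\vec x)$ suffices is precisely the content of the paper's inductive step. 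Both arguments are sound; yours is somewhat more self-contained on the precondition bookkeeping.
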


\begin{proof}
  We consider the transformation of a model ${\cal M}_P$ to a model
  ${\cal M}_D$, and assume ${\cal M}_P$ is a model of ${\cal F}_P$. 
  We now construct an interpretation ${\cal M}_D$ for the formulas with the
  signature over ${\cal F}_D$.

  The interpretation ${\cal M}_D$ will be the same as ${\cal M}_P$,
  except for (1) the interpretation of the new types \texttt{Rulename$_C$},
  each of which will be chosen to be the set of all rule names having $C$ as
  conclusion, and (2) the interpretation of the new predicates $C^+$ on which
  we will focus now: 
  For each rule  $\forall x_1, \dots, x_n.\; Pre(x_1,
  \dots, x_n) \IMPL C(x_1, \dots, x_n)$ with name  $rn$, whenever the $n$-tuple
  $(a_1, \dots, a_n)$ satsifies the precondition $Pre$ under ${\cal M}_P$ and, consequently,
  $(a_1, \dots, a_n) \in C^{{\cal M}_P}$, we will have   $(rn, a_1, \dots, a_n) \in (C^+)^{{\cal M}_D}$.

  It remains to be shown that ${\cal M}_D$ is indeed a model of ${\cal
    M}_D$. We show that related formulas in ${\cal F}_P$ and ${\cal F}_D$
  are interpreted as true in ${\cal M}_P$ resp.{} ${\cal M}_D$, where two
  formulas are \emph{related} if they are rules originating from the same rule
  of ${\cal R}_M$, or if they are related inversion predicates $Inv_C$ and $Inv_{C^+}$.

  We first address related rules. The proof is by well-founded induction over
  the rule order $\prec_R$. Consider a rule $r_P \in {\cal F}_P$ with rule
  name $rn_P$ which by construction has the form
  $r_p = \forall x_1, \dots x_n.\; pre_P^o \AND \NOT pre_P^1 \AND \NOT pre_P^k
  \IMPL C(x_1, \dots, x_n)$.
  We make a case distinction:
  \begin{itemize}
  \item Assume that for arguments $(a_1, \dots, a_n)$, interpretation
    ${\cal M}_P$ satisfies the precondition
    $pre_P^o \AND \NOT pre_P^1 \AND \NOT pre_P^k$ and thus also the
    conclusion. In this case, $(rn_P, a_1, \dots, a_n)\in (C^+)^{{\cal M}_D}$, thus
    satisfying the related rule $r_D \in {\cal F}_D$.
  \item Assume that for arguments $(a_1, \dots, a_n)$, interpretation
    ${\cal M}_P$ does not satisfy the precondition. Either $pre_P^o$ is not
    satisfied, leading again to a satisfying assignment of the related rule
    $r_D$, or one of the $pre_P^i$ is satisfied.

    In this case, as the rule $r_P^i$ with precondition $pre_P^i$ is strictly
    smaller than $r_P$ \wrt{} $\prec_R$, by induction hypothesis, also the
    postcondition of $r_P^i$ will be satisfied, so that in ${\cal M}_D$, one
    negated precondition of the related rule $r_D$ is not satisfied, so $r_D$
    is satisfied.
  \end{itemize}

  Once the equi-satisfiability of related rules has been established, it is
  easy to do so for related inversion predicates $Inv_C$ and $Inv_{C^+}$.
\end{proof}

\begin{lemma}\label{lemma:md_to_mp_with_proof}
  Any model ${\cal M}_D$ of ${\cal F}_D$ can be transformed into a model
  ${\cal M}_P$ of ${\cal F}_P$.
\end{lemma}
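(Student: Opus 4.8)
The plan is to mirror the construction used in \lemmaref{lemma:mp_to_md_with_proof}, but run it in the opposite direction. Given a model ${\cal M}_D \models {\cal F}_D$, I build ${\cal M}_P$ by leaving the interpretation of all non-transformable predicates and of the base types untouched, discarding the auxiliary types $\mathtt{Rulename}_C$, and defining every transformable predicate $C$ by \emph{existential projection} over the rule-name argument:
\[
  \vec{a} \in C^{{\cal M}_P} \quad:\Longleftrightarrow\quad \exists w.\; (w, \vec{a}) \in (C^+)^{{\cal M}_D}.
\]
Informally, ``$C$ holds iff it is derivable by at least one rule''. This is the natural inverse of the forward construction, which put $(rn, \vec{a})$ into $(C^+)^{{\cal M}_D}$ exactly when rule $rn$ fires.

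The core of the argument is a single correspondence, proved by well-founded induction on $\prec_R$: for every rule $\rho$ (named $rn$, with atomic conclusion $C$) and every tuple $\vec{a}$,
\[
  (rn, \vec{a}) \in (C^+)^{{\cal M}_D} \quad\Longleftrightarrow\quad {\cal M}_P \text{ satisfies the } {\cal R}_P\text{-precondition } \Phi^P_\rho \text{ at } \vec{a},
\]
where $\Phi^P_\rho$ and $\Phi^D_\rho$ denote the fully elaborated preconditions produced by the two strategies. The direction from left to right is the delicate one: I instantiate the inversion formula $Inv_{C^+} \in {\cal F}_D$ at the constant $rn$, and since distinct rule names denote distinct elements, all disjuncts but the one for $rn$ vanish, leaving exactly $\Phi^D_\rho$. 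The converse direction uses the ${\cal R}_D$-rule $\rho$ itself, whose body entails $(C^+)(rn, \cdot)$. It then remains to identify $\Phi^D_\rho$ under ${\cal M}_D$ with $\Phi^P_\rho$ under ${\cal M}_P$: positive occurrences of a transformable predicate $Q$ in the shared original body match because the body rule-name variables do not occur in the head, so the universal prefix over the body turns the antecedent into an existential, matching the projection $\exists w.\,Q^+(w, \cdot)$ that defines $Q^{{\cal M}_P}$; and each negated conjunct $\NOT (C^i)^+(\rho^i, \cdot)$ of $\Phi^D_\rho$ matches the conjunct $\NOT \Phi^P_{\rho^i}$ of $\Phi^P_\rho$ because $\rho^i \prec_R \rho$, so the induction hypothesis applies.

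Once the correspondence is in place, the two model obligations follow immediately. An ${\cal R}_P$-rule $\rho$ with head $C$ holds in ${\cal M}_P$: if $\Phi^P_\rho(\vec{a})$ holds, the correspondence gives $(rn, \vec{a}) \in (C^+)^{{\cal M}_D}$, so $\vec{a} \in C^{{\cal M}_P}$ by projection and the head is satisfied. An inversion formula $Inv_C \in {\cal F}_P$ holds as well: if $\vec{a} \in C^{{\cal M}_P}$ then some $w$ witnesses $(w, \vec{a}) \in (C^+)^{{\cal M}_D}$, and $Inv_{C^+} \in {\cal F}_D$ forces this $w$ to be a genuine rule name $rn_0$ with $\Phi^D_{\rho_0}$ satisfied; the correspondence converts this into $\Phi^P_{\rho_0}(\vec{a})$, which is one of the disjuncts of $Inv_C$.

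The main obstacle --- and the reason the inversion formulas of ${\cal F}_D$ are \emph{decisive} here, as announced in \secref{sec:comparison} --- is the left-to-right direction of the correspondence. In \lemmaref{lemma:mp_to_md_with_proof} the predicate $C^+$ was built to contain exactly the firing tuples, so grounding came for free; here $(C^+)^{{\cal M}_D}$ is only given and could a priori contain spurious tuples justified by no rule body, in which case the projected $C$ would be too large and would falsify $Inv_C$. It is precisely $Inv_{C^+}$ that excludes this and lets the induction push derivability back onto preconditions. The remaining bookkeeping --- the quantifier contraction for body variables and the distinctness of rule-name constants --- is routine once the inductive scheme along $\prec_R$ is fixed.
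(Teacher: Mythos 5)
Your proposal follows essentially the same route as the paper's (sketched) proof: the same well-founded induction on $\prec_R$, the same case distinction on whether the elaborated precondition of the $D$-rule holds, and the same decisive use of the inversion formulas $Inv_{C^+}$ of ${\cal F}_D$ to push membership in $(C^+)^{{\cal M}_D}$ back onto rule preconditions. You are in fact more explicit than the paper on two points it leaves implicit --- the definition of $C^{{\cal M}_P}$ as the existential projection of $(C^+)^{{\cal M}_D}$ over the rule-name argument, and the reliance on distinctness of rule-name constants when instantiating $Inv_{C^+}$ --- so your write-up is a faithful, somewhat more rigorous elaboration of the paper's argument.
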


\begin{proof} (Sketch)
  In analogy to \lemmaref{lemma:mp_to_md}, we start from a model ${\cal M}_D$
  of ${\cal F}_D$ and construct a model ${\cal M}_P$ of ${\cal F}_P$. 

  As in \lemmaref{lemma:mp_to_md}, the proof is by induction on $\prec_R$.
  Consider a rule $r_D \in {\cal F}_D$ with rule
  name $rn_D$ which by construction has the form
  $r_D = \forall x_1, \dots x_n.\; pre_D^o \AND \NOT post_D^1(rn_1) \AND \NOT post_D^k(rn_k)
  \IMPL C^+(rn_D, x_1, \dots, x_n)$. Again, we make a case distinction:
  \begin{itemize}
  \item Assume that for arguments $(a_1, \dots, a_n)$, interpretation
    ${\cal M}_D$ satisfies the precondition and thus also the conclusion. In
    this case, $(a_1, \dots, a_n)\in C^{{\cal M}_P}$, thus satisfying the
    related rule $r_P \in {\cal F}_P$.
  \item Assume that for arguments $(a_1, \dots, a_n)$, interpretation
    ${\cal M}_D$ does not satisfy the precondition. The interesting situation
    is if one $post_D^i(rn_i)$ is satisfied. At this point, we need the
    inversion formula of $post_D^i$, of the form
    $\forall r.\; post_D^i(r) \IMPL P_1(r) \OR \dots \OR P_p(r)$. The rule
    name $rn_i$ permits to select precisely the precondition $P_j$ of the
    related formula
    $r_P = \forall x_1, \dots x_n.\; pre_P^o \AND \NOT pre_P^1 \AND \NOT
    pre_P^k \IMPL C(x_1, \dots, x_n)$.
  \end{itemize}
\end{proof}
\section{Brief outline of ASP}
First we shall give a brief overview of Answer Set Programming. ASP is a declarative programming language used mainly in Knowledge Representation and Reasoning to model rules, facts, integrity constraints etc. within a particular scenario that one wishes to consider. A rule in ASP has the form:
\[h\leftarrow b_{1},b_{2}..,b_{k},not\; b_{k+1}...,not\; b_{n}.\]
Here $h$ and $b_{1}$...,$b_{n}$ are atoms. For an atom $b_{i}$, $not$ $b_{i}$ is the negated atom where the $not$ represents negation as failure. Informally $not$ $b_{i}$ is true exactly when $b_{i}$ cannot be derived. This is also sometimes known as the `closed world assumption'. Intuitively the rule above says that when $b_{1},b_{2}..,b_{k},not$ $b_{k+1}...,not$ $b_{n}$ are all true, $h$ is true. $h$ is also sometimes known as the head of the rule and the positive and negated atoms $b_{1},b_{2}..,b_{k},not$ $b_{k+1}...,not$ $b_{n}$ form the body. A rule with only a head and an empty body is called a fact. A logic program is a set of facts and rules. (In fact ASP can also model other things like integrity constrains, disjunctions in rule heads etc, but we will not be using these features in our paper). When a logic program is passed to an ASP solver, the solver returns a set of $stable$ $models$ (also known as $answer$ $sets$) which make all the rules and facts in the logic program true. The set of $stable$ $models$ of a logic program is calculated using the $stable$ $model$ $semantics$ for ASP. For logic programs without negation-as-failure, the set of stable models is exactly the set of subset minimal models of the program. For logic programs with negation as failure stable models are most commonly defined using a construction known as the $reduct$ of a logic program with respect to an $Herbrand$ $interpretation$. Please see \cite{asp_background} for more details on ASP and the stable model semantics.

\section{ASP encoding}
Here we recap the ASP encoding scheme given a configuration $Config = (R,F,M,I)$ of legal rules. We will refer to this in the proof of lemma in 5.7, which will be given next.
\begin{lstlisting}[language=Prolog, numbers=left]
% For any f in F, we have:
is_legal(f). 

% All the modifiers get added as facts like for example:
despite(1,2).
subject_to(4,5).

% Any rule r in R is encoded using the general schema:
according_to(r,C_r):-is_legal(pre_con(r)).

% Say {a,b,c} is a minimal inconsistent set in I, then this would get encoded as: 
opposes(a,b) :- is_legal(c)
opposes(a,c) :- is_legal(b).
opposes(b,c) :- is_legal(a).
%The above is done for every minimal inconsistent set. A pair from the set forms the opposes predicate and the rest of the elements go in the body 

% Say {d,e,f,g} is another minimal inconsistent set in I, then this would get encoded as:

opposes(d,e) :- is_legal(f),is_legal(g).
opposes(d,f) :- is_legal(e),is_legal(g).
opposes(d,g) :- is_legal(f),is_legal(e).
opposes(e,f) :- is_legal(d),is_legal(g).
opposes(e,g) :- is_legal(f),is_legal(d).
opposes(f,g) :- is_legal(d),is_legal(e).

% If we had a minimal inconsistent set consisting of only 2 elements say {j,k}, this would get encoded as:

opposes(j,k).

% Opposes is a symmetric relation
opposes(X,Y):-opposes(Y,X).


% Encoding for 'despite'
defeated(R,C,R1) :-
    according_to(R,C), according_to(R1,C1), despite(R,R1).

%Encoding for 'subject_to'
defeated(R,C,R1) :-
    according_to(R,C), legally_valid(R1,C1),
    opposes(C,C1), subject_to(R1,R).

% Encoding for 'strong_subject_to'
defeated(R,C,R1) :-
    according_to(R,C), legally_valid(R1,C1),
    strong_subject_to(R1,R).

not_legally_valid(R) :- defeated(R,C,R1).

legally_valid(R,C):-according_to(R,C),not not_legally_valid(R).

is_legal(C):-legally_valid(R,C).
\end{lstlisting}

\section{Proof of Lemma 4 in 5.7}\label{sec:proofs}

Firstly note, that the converse of the lemma is false. That is, there are configurations and legal models of those configurations that do not correspond to any answer set of the ASP encoding. A simple example of this is the following: Consider the configuration where there are only 2 rules:\\ (1): $a\rightarrow a$\\
(2): $not$ $a\rightarrow b\\$

There are no other facts, modifiers or minimal inconsistent sets. Then for
this configuration $\{legally\_valid(1,a),
is\_legal(a)\}$ is a legal model but it does not correspond to any answer set
of the ASP encoding. As explained in \cite{KRR_notes}, this is essentially due
to the fact that not all minimal supported models of a logic program are
stable models. See \cite{KRR_notes} for the example given above and a further
discussion on this topic.  Now we shall proceed to the proof of the lemma.

Given a configuration $Config$, let $A_{Config}$ be an answer set of it's ASP
encoding and let $S_{A_{Config}}$ be the set of $is\_legal$ and
$legally\_valid$ predicates in $A_{Config}$. It is easy to see that
$A_{Config}$ satisfies A1-A5. For example if the set $M$ from $Config$
contains $strong\_subject\_to(r_{i}, r_{j})$, then $A_{config}$ would contain
$strong\_subject\_to(r_{i}, r_{j})$. Now if $S_{A_{Config}}$ contains
$legally\_valid(r_{i}, C_{r_{i}})$, then so would $A_{Config}$. Now, if
$pre\_con(r_{j})$ is satisfied in $A_{Config}$, then
$according\_to(r_{j},C_{r_{j}})$ is in $A_{Config}$ and therefore
$defeated(r_{j}, C_{r_{j}}, r_{i})$ is in $A_{Config}$ by line 44 of the
general encoding shown above. Therefore $not\_legally\_valid(r_{j})$ is in
$A_{Config}$ by line 48 of the encoding. Therefore by the line 50 of the
encoding, $legally\_valid(r_{j},C_{r_{j}})$ is not in $A_{Config}$. Therefore
$legally\_valid(r_{j},C_{r_{j}})$ is not in $S_{A_{Config}}$.

Now if $pre\_con(r_{j})$ is not satisfied in $A_{Config}$, then
$according\_to(r_{j},C_{r_{j}})$ is not in $A_{Config}$ and so again
$legally\_valid(r_{j},C_{r_{j}})$ is not in $A_{Config}$ and therefore not in
$S_{A_{Config}}$.

We shall now show that $S_{A_{Config}}$ satisfies A6 and A7.

Say the set $M$ contains $subject\_to(r_{i}, r_{j})$ and
$legally\_valid(r_{i}, C_{r_{i}})$ is in $S_{A_{Config}}$. Furthermore suppose
that there exists some $k\in I$ which contains $C_{r_{i}}$ and $C_{r_{j}}$
such that $is\_legal(k\setminus \{C_{r_{j}}\})\subseteq S_{A_{Config}}$. Then
it follows that, $is\_legal(k\setminus \{C_{r_{j}}\})\subseteq
A_{Config}$. Therefore due to the way that the $opposes$ predicates are
defined in the encoding, it follows that $opposes(C_{r_{i}}, C_{r_{j}})$ is in
$A_{Config}$. Now if $pre\_con(r_{j})$ is in $A_{Config}$ then it follows from
line 39 of the encoding that, $defeated(r_{j}, C_{r_{j}}, r_{i}) $ is in
$A_{Config}$, therefore $legally\_valid(r_{j}, C_{r_{j}})$ is not in
$A_{Config}$ and therefore not in $S_{A_{Config}}$.

Again as before, if $pre\_con(r_{j})$ is not in $A_{Config}$ then
$legally\_valid(r_{j}, C_{r_{j}})$ is not in $A_{Config}$ and therefore not in
$S_{A_{Config}}$.

Suppose $S_{A_{Config}}\models pre\_con(r_{j})$, then $A_{Config}$ satisfies
$pre\_con(r_{j})$. So $according\_to(r_{j},C_{j})$ is in $A_{Config}$, then if
$legally\_valid(r_{j}, C_{j})$ is not in $A_{Config}$, according to lines 48
and 50 of the encoding it must be the case that $defeated(r_{j},C_{j},r_{k})$
is in $A_{Config}$ for some rule $r_{k}$. But then because of the way that the
$defeated$ predicate is defined in lines 35, 39, 44, it must mean that rule
$r_{j}$ is invalidated in accordance with either A4, A5 or A6. So
$S_{A_{Config}}$ satisfies A7. $\square$

\section{Pathological rule configuration examples}
In this section we shall briefly give some examples of rule configurations
that fail to satisfy certain properties.

One may suspect that given any configuration, the ASP encoding only generates answer sets corresponding to subset minimal legal models. However this is not the case. Consider the configuration where there are 3 rules:\\ 
$(1)$ $a\rightarrow c$\\
$(2)$ $not$ $c\rightarrow e$\\
$(3)$ $a\rightarrow a$\\
The only fact is $is\_legal(a)$, and there are 2 modifiers $despite(1,2)$, $strong\_subject\_to(3,2)$. There are no minimal inconsistent sets.\\

For this configuration, the ASP encoding generates two answer sets
corresponding to the legal models: \newline $\{is\_legal(a)$,
$legally\_valid(3,a)\}$ and $\{is\_legal(a)$, $legally\_valid(3,a)$,
$is\_legal(c)$, $legally\_valid(1,c)\}$.

We suspect that the ASP encoding does only return subset minimal legal models
if there is no negation as failure in rule pre-conditions or if there are no
$despite$ modifiers, however pursuing this matter fully is left for future
work.

Here we will give an example of a rule configuration that has no legal models
even though none of the rule modifiers involve a rule directly being subject
to itself.

Consider the configuration where there are 2 rules:\\ $(1)$ $a\rightarrow b$\\
$(2)$ $b\rightarrow c$\\
The only fact is $is\_legal(a)$, there is one modifier $subject\_to(2,1)$ and there is one minimal inconsistent set $\{b,c\}$. Then this rule configuration has no legal models. 

\section{Further remarks on example in 5.8}

Here we explore further modifications of the example in 5.8. First, we wish to remind the reader that if there was a 5th rule in this rule set and we had a $despite(4,5)$ modifier, then as long as the precondition of rule 4 is true, it would still invalidate rule 3 even if rule 4 itself got invalidated by rule 5.

However, in the case of $subject\_to$ and $strong\_subject\_to$, the dominating rules needs to be legally valid to invalidate the subordinate rule. 

As an illustration of the previous point say we have a fifth rule which says, if Bob owns a company, he may spend up to 20 million dollars on cars, and we had $despite(4,5)$ as an additional modifier. Suppose now also we have the three facts that Bob is wealthy, Bob is extremely wealthy and Bob owns a company. Then we would get exactly one legal model/answer set in which exactly rule 1, rule 2 and rule 5 were legally valid. So rule 4 would invalidate rule 3 even though it itself is invalidated by rule 5.


\end{document}